\documentclass[11pt,letterpaper]{article}
\usepackage[margin=1in]{geometry}
\usepackage[T1]{fontenc}
\usepackage{mathptmx}

\usepackage[hyphens]{url}
\usepackage{graphicx}
\usepackage{natbib}
\setcitestyle{square,comma}
\usepackage{caption}
\usepackage{algorithm}
\usepackage{algorithmic}
\usepackage{booktabs}
\usepackage{amsmath,amssymb,amsthm}
\usepackage{array}
\usepackage{placeins}
\usepackage[hidelinks]{hyperref}

\title{ECMR: Evidence-Aware Deliberation Control for Multi-Agent LLM Reasoning}

\author{Yuan Liu, Haoyue Peng, Fanliang Hu, Lei Huang, Wenli Cai\thanks{Corresponding author: \href{mailto:wenli.cai@cqu.edu.cn}{wenli.cai@cqu.edu.cn}.}\\[0.5em]
\small Medical Artificial Intelligence Laboratory, Chongqing University Cancer Hospital,\\
\small Chongqing University, Chongqing 400030, China}
\date{}
\hypersetup{pdftitle={ECMR: Evidence-Aware Deliberation Control for Multi-Agent LLM Reasoning},
  pdfauthor={Yuan Liu; Haoyue Peng; Fanliang Hu; Lei Huang; Wenli Cai}}

\newtheorem{theorem}{Theorem}

\newcommand{\E}{\mathbb{E}}

\newcommand{\argmax}{\mathop{\mathrm{argmax}}}

\begin{document}

\maketitle

\begin{abstract}
Multi-agent Large Language Model (LLM) systems typically commit to a deliberation protocol before inference. A fixed protocol cannot allocate computation differently to resolved and unresolved instances, and it may mistake repeated support for independent confirmation. We study this failure as \emph{semantic false consensus}: a confident group answer supported by narrow, duplicated, or correlated rationales. We introduce ECMR, an evidence-aware controller over frozen role-conditioned LLMs. At each step, ECMR summarizes answer concentration, support diversity and independence, contradictions, role behavior, verification history, and remaining budget, then selects whether to propose, challenge, verify, arbitrate, diversify, correct a role, or stop. We formulate this decision as finite-horizon budgeted control and derive sufficient conditions under which state conditioning, risk-triggered verification, and role correction improve cost-adjusted utility. Across a declared held-out evaluation of 9,000 records from MedQA, CaseHOLD, and FinQA, ECMR-Full increases pooled accuracy from 62.1\% to 84.9\%, reduces false-consensus rate from 40.7\% to 17.0\%, and raises evidence-support rate from 58.0\% to 88.7\%. State shuffling loses 11.7 accuracy points with only 0.7 fewer rounds, while harder traces shift from early stopping toward arbitration. ECMR-Full uses the largest token budget, yielding the strongest absolute reliability rather than uniformly superior token efficiency.
\end{abstract}

\section{Introduction}

Multi-agent LLM systems distribute reasoning among proposers, critics, verifiers, and aggregators, often improving on a single generation \citep{wang2024survey,wu2023autogen,tang2024medagents}. The interaction policy is commonly fixed in advance: the communication graph, role order, round count, and stopping rule do not depend on what the agents have actually established \citep{du2023improving,li2024improving,chen2024reconcile}. This design couples every instance to the same workflow. It spends unnecessary computation when evidence is decisive and can terminate too early when disagreement, contradiction, or unsupported agreement remains.

Agreement is an unreliable stopping signal when agents share a model family, prompt context, retrieved sources, or tools. Matching answers may then reflect a common reasoning path rather than independent confirmation. We call a confident answer backed by narrow, duplicated, or correlated rationales \emph{semantic false consensus}. More interaction does not necessarily repair it: another round may amplify the same unsupported claim. A useful controller must therefore answer two coupled questions from the evolving trace: whether the current evidence is sufficient, and which operation is most likely to improve it.

The value of an additional operation is instance dependent. A routine question may need one supported proposal; an ambiguous medical case may need an independent check, competing legal holdings may require arbitration, and a multi-step financial calculation may require targeted verification. We evaluate this setting on MedQA \citep{jin2021what}, CaseHOLD \citep{zheng2021casehold}, and FinQA \citep{chen2021finqa}. Their evidence structures differ, but the control problem is shared: estimate what remains unresolved, choose the operation that addresses it, and stop when the expected reliability gain no longer justifies its cost.

We introduce \textbf{ECMR}, an evidence-aware deliberation controller for multi-agent LLM reasoning. ECMR converts the partial trace into a semantic state that distinguishes answer concentration from the diversity and independence of its support. A learned policy selects the next operation, a calibrated risk head gates stopping under fragile agreement, and role memory detects drift or redundant behavior. Only these controller-side components are trained; the role-conditioned LLMs remain frozen. This separation makes runtime control the experimental variable and leaves an auditable trace of every intervention.

This work makes three contributions. First, it formulates deliberation as budget-constrained sequential control over an explicit semantic evidence state, rather than as a fixed-depth conversation. Second, it links false-consensus risk and role dependence to concrete actions and gives sufficient conditions for state-aware continuation, verification, and correction to improve cost-adjusted utility. Third, it evaluates eight methods on 9,000 declared held-out records and triangulates the aggregate result with controller ablations, difficulty-conditioned actions, resource measurements, cross-backbone diagnostics, and online-shift tests. The evidence identifies both the mechanism of the reliability gain and the regimes in which its additional computation is warranted.

\section{Related Work}

\paragraph{Multi-agent deliberation.}
Role-based workflows and generated teams coordinate specialized LLM agents \citep{wang2024survey,wu2023autogen,qian2024chatdev,chen2023autoagents,tang2024medagents}; debate methods vary turns, topology, and viewpoint diversity \citep{du2023improving,wang2023can,tian2024encouraging,li2024improving,chen2024reconcile}. These systems show that interaction structure matters but usually commit to a protocol before observing the evolving trace. ECMR instead selects and terminates heterogeneous operations from the current evidence state.

\paragraph{Self-correction and tool-augmented reasoning.}
Chain-of-thought, Self-Refine, and Reflexion expose or revise one trajectory \citep{wei2022chainofthought,madaan2023selfrefine,shinn2023reflexion}; ReAct, Toolformer, Tree of Thoughts, and Program of Thoughts add actions, search, or numerical execution \citep{yao2023react,schick2023toolformer,yao2023tree,chen2022program}. ECMR acts above an individual trajectory: it compares collective operations, tests whether roles add independent support, and can stop without completing a preset cycle.

\paragraph{Adaptive computation and reliability signals.}
Routing allocates computation across models or aggregates completed outputs \citep{chen2023frugalgpt,jiang2023llmblender}; ECMR allocates it within unfinished deliberation. Self-consistency aggregates samples \citep{wang2022selfconsistency}, semantic uncertainty diagnoses unreliable generations \citep{farquhar2024detecting}, and LLM judges structure outcome labels \citep{liu2023geval}. None jointly chooses the next heterogeneous operation. ECMR turns support diversity, contradiction, role overlap, and calibrated risk into that pre-answer decision interface.

\section{ECMR Framework}

\begin{figure*}[!t]
    \centering
    \includegraphics[width=\textwidth]{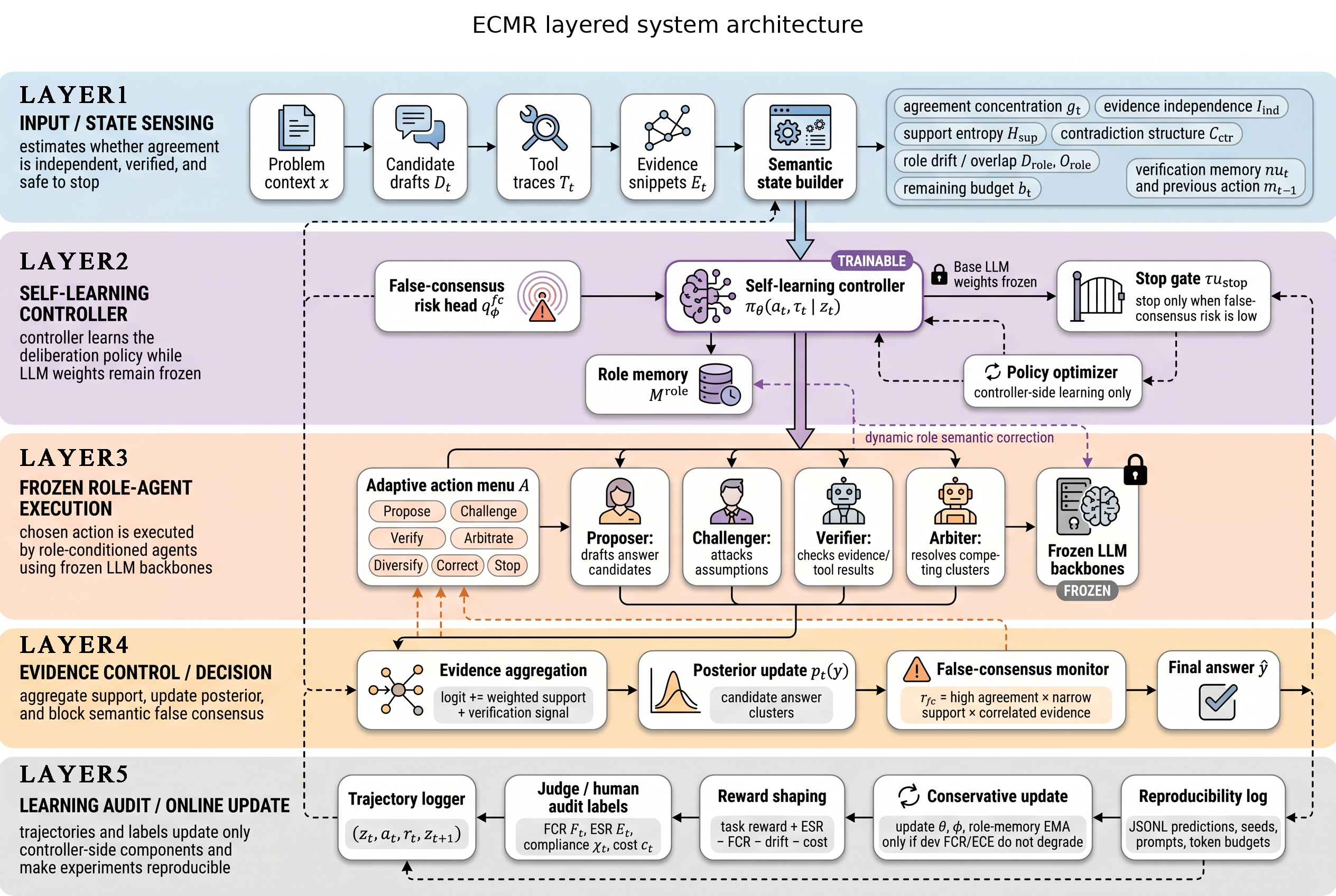}
    \caption{ECMR separates trainable deliberation control from frozen LLM execution. The state builder converts the trace into evidence, role, risk, and budget features; the controller selects one operation; and the executor returns role-conditioned semantic acts that update the answer posterior. Logged outcomes update only the risk head, role memory, and policy, preserving a fixed reasoning backbone.}
\label{fig:ecmr_architecture}
\end{figure*}

Figure~\ref{fig:ecmr_architecture} defines three interfaces: the state builder maps a partial trace to evidence and control features, the policy maps those features to an operation, and frozen role-conditioned agents return semantic acts. The state is recomputed after every operation, so two traces at the same round can take different actions. Each cycle updates the answer posterior, risk, role memory, and budget. If the risk gate vetoes \texttt{stop}, the policy is renormalized over admissible non-stop actions; deliberation then continues until the gate admits stopping or the budget is exhausted. Fixed-ECMR uses the same agents and actions under a fixed schedule, isolating the value of state-conditioned operation choice.

\subsection{State-Dependent Deliberation Control}

At step $t$, ECMR observes
\[
z_t=[g_t,\widetilde{H}^{sup}_t,I^{ind}_t,C^{ctr}_t,\bar{D}^{role}_t,\bar{O}^{role}_t,b_t,\nu_t,m_{t-1}],
\]
where $g_t$ is answer concentration, $\widetilde{H}^{sup}_t$ is normalized support entropy, $I^{ind}_t$ is evidence independence, $C^{ctr}_t$ is contradiction structure, $\bar{D}^{role}_t$ and $\bar{O}^{role}_t$ are role drift and overlap, $b_t$ is remaining budget, $\nu_t$ is verification memory, and $m_{t-1}$ is the previous action. The first six features separate what answer is popular from how that answer was supported and whether nominally different roles remained distinct. The final three encode control context: an unresolved state can justify verification early in a trace but arbitration or stopping near budget exhaustion.

The controller selects an action from
\[
\begin{aligned}
\mathcal{A}=\{&\texttt{propose},\texttt{challenge},\texttt{verify},\texttt{arbitrate},\\
&\texttt{diversify},\texttt{correct},\texttt{stop}\}.
\end{aligned}
\]
The policy samples $a_t\sim\pi_\theta(\cdot\mid z_t)$, while the calibrated risk head $q_\phi(z_t)$ vetoes \texttt{stop} when agreement remains fragile. Executing $a_t$ invokes only the relevant frozen role-conditioned agents and produces semantic acts $\{u_t^i\}_{i=1}^{N}$.

ECMR maintains logits $\ell_t(y)$ over candidate answer clusters and updates
\[
\ell_{t+1}(y)=\ell_t(y)+\sum_{i=1}^{N} w_t^i s^{ans}(u_t^i,y)+\beta_v v_t(y),
\]
where $s^{ans}(u_t^i,y)$ measures semantic support, $w_t^i$ is the agent influence weight, and $v_t(y)$ is an optional verification signal. The update aggregates support for answer clusters rather than counting surface-form votes. The normalized posterior is
\[
p_{t+1}(y)\propto \exp(\ell_{t+1}(y)), \qquad g_t=\max_y p_t(y).
\]
\texttt{challenge} and \texttt{diversify} expand or stress-test the support set, \texttt{verify} adds an external check, \texttt{arbitrate} resolves persistent answer clusters, and \texttt{correct} restores a drifting or redundant role. A permitted \texttt{stop} returns the highest-posterior answer; budget exhaustion returns the same answer while recording that termination was forced.

\subsection{False-Consensus Control, Role Correction, and Learning}

ECMR uses the interpretable false-consensus proxy
\[
r_t^{fc}=g_t(1-\widetilde{H}^{sup}_t)(1-I^{ind}_t),
\]
which is large only when the answer is concentrated while support entropy and evidence independence are both low. Thus a unanimous but duplicated rationale can remain high risk. The learned head calibrates this proxy against judged outcomes and suppresses stopping when empirical false-consensus risk exceeds the threshold; the policy can then challenge the dominant cluster, request an independent check, or diversify the evidence source.

ECMR also maintains semantic prototypes for prompt-defined roles. Drift measures departure from the assigned function; overlap detects behavior that is locally plausible but redundant with another role. The \texttt{correct} action reinstates the role constraint before another act is admitted at full influence, while $w_t^i$ can discount support that remains redundant. Training uses tuples $(z_t,a_t,r_t,z_{t+1})$, with transition reward balancing task success and evidence support against false consensus, role drift, and action cost. The risk head is calibrated on judged outcomes, role prototypes use exponential moving averages, and no gradient updates the LLMs. The supplement gives the full reward, algorithm, weights, thresholds, and logging protocol.

\section{Theory}

We model ECMR as a finite-horizon decision process over semantic states. Let $U(y_T,y^*)$ denote terminal answer utility, including correctness and evidential support. For action cost $c(a_t)$ and budget $B$, policy $\pi(a\mid z)$ maximizes
\[
\begin{aligned}
\max_\pi\quad &\E_{\tau\sim\pi}\!\left[U(y_T,y^*)-\lambda_{fc}\!\sum_t r_t^{fc}\right.\\
&\left.\hspace{5.4em}-\lambda_{drift}\!\sum_tD_t^{role}\right],\\
\text{s.t.}\quad &\E_{\tau\sim\pi}\!\sum_t c(a_t)\le B.
\end{aligned}
\]
Let $\mathcal{L}(\pi,\eta)$ be the corresponding Lagrangian, where $\eta$ prices expected action cost. A fixed schedule chooses $a_t=f(t)$, whereas ECMR uses the state-aware class $\Pi_{\mathrm{state}}=\{\pi:a_t\sim\pi(\cdot\mid z_t)\}$.

\textbf{Value of semantic adaptivity.} Every fixed schedule is a state-aware policy that ignores its input, so $\max_{\pi\in\Pi_{\mathrm{state}}}\mathcal{L}(\pi,\eta)$ weakly dominates the best fixed schedule. Let $Q_\eta(z,\texttt{continue})=\max_{a\ne\texttt{stop}}Q_\eta(z,a)$ and define $\Delta_\eta(z)=Q_\eta(z,\texttt{continue})-Q_\eta(z,\texttt{stop})$. The dominance is strict whenever two same-round states reached with positive probability have opposite signs of $\Delta_\eta$: a time-indexed schedule must make the same decision in both states, whereas a state-aware policy can spend only where the expected improvement exceeds cost. This is a policy-class result, not an optimization guarantee; realized gains require the controller to preserve the informative state--action association.

\begin{table*}[!t]
\centering
\begingroup
\small
\setlength{\tabcolsep}{2.6pt}
\begin{tabular}{>{\raggedright\arraybackslash}p{0.10\textwidth}*{3}{>{\centering\arraybackslash}p{0.23\textwidth}}c}
\toprule
Method & \shortstack{MedQA\\Acc / FCR / ESR} & \shortstack{CaseHOLD\\Acc / FCR / ESR} & \shortstack{FinQA\\Acc / FCR / ESR} & Mean Rank \\
\midrule
Direct & 62.3$\pm$0.8 / 0.41$\pm$0.02 / 0.58$\pm$0.02 & 65.7$\pm$0.6 / 0.38$\pm$0.02 / 0.61$\pm$0.03 & 58.4$\pm$1.0 / 0.43$\pm$0.03 / 0.55$\pm$0.02 & 8.0 \\
Self-Refine & 66.5$\pm$0.7 / 0.37$\pm$0.02 / 0.63$\pm$0.02 & 69.2$\pm$0.5 / 0.34$\pm$0.02 / 0.66$\pm$0.02 & 63.1$\pm$0.9 / 0.39$\pm$0.02 / 0.60$\pm$0.03 & 7.0 \\
Multi-Debate & 70.1$\pm$0.6 / 0.33$\pm$0.01 / 0.69$\pm$0.03 & 73.5$\pm$0.7 / 0.30$\pm$0.01 / 0.71$\pm$0.02 & 67.8$\pm$0.8 / 0.35$\pm$0.02 / 0.65$\pm$0.02 & 6.0 \\
MAD-Debate & 73.4$\pm$0.5 / 0.30$\pm$0.02 / 0.73$\pm$0.02 & 76.8$\pm$0.6 / 0.27$\pm$0.02 / 0.75$\pm$0.03 & 70.5$\pm$0.7 / 0.32$\pm$0.01 / 0.69$\pm$0.02 & 5.0 \\
AutoGen & 76.2$\pm$0.6 / 0.26$\pm$0.01 / 0.78$\pm$0.02 & 79.4$\pm$0.5 / 0.24$\pm$0.01 / 0.79$\pm$0.02 & 73.6$\pm$0.6 / 0.28$\pm$0.02 / 0.74$\pm$0.03 & 4.0 \\
MetaGPT & 77.5$\pm$0.4 / 0.25$\pm$0.02 / 0.80$\pm$0.03 & 80.6$\pm$0.4 / 0.23$\pm$0.02 / 0.81$\pm$0.02 & 74.9$\pm$0.5 / 0.27$\pm$0.01 / 0.76$\pm$0.02 & 3.0 \\
Fixed-ECMR & 80.1$\pm$0.5 / 0.22$\pm$0.01 / 0.84$\pm$0.02 & 83.2$\pm$0.5 / 0.20$\pm$0.01 / 0.85$\pm$0.02 & 77.3$\pm$0.6 / 0.24$\pm$0.02 / 0.80$\pm$0.03 & 2.0 \\
\textbf{ECMR-Full} & \textbf{84.7$\pm$0.4 / 0.17$\pm$0.01 / 0.89$\pm$0.02} & \textbf{87.5$\pm$0.3 / 0.15$\pm$0.01 / 0.90$\pm$0.02} & \textbf{82.6$\pm$0.4 / 0.19$\pm$0.01 / 0.87$\pm$0.02} & \textbf{1.0} \\
\bottomrule
\end{tabular}
\endgroup

\caption{Domain-wise held-out results. Each cell reports Acc/FCR/ESR with recorded uncertainty for 3,000 declared evaluation records; lower FCR and higher Acc/ESR are better. ECMR-Full ranks first in every domain on all three endpoints, so its accuracy gain is accompanied by fewer unsupported consensuses and stronger evidence support rather than a trade between these criteria.}
\label{tab:main_results}
\end{table*}

\textbf{When verification pays.} Let $F_t=\{\widehat{y}_t\neq y^*,g_t\ge\gamma\}$ denote a confident error, and suppose calibration gives $\Pr(F_t\mid z_t)\ge\alpha_1r_t^{fc}-\alpha_0$. For states with $r_t^{fc}\ge\kappa$ and $\alpha_1\kappa>\alpha_0$, a verification action that contracts this risk by $\rho<1$ is utility improving whenever
\[
L(1-\rho)(\alpha_1\kappa-\alpha_0)>\eta c_v,
\]
where $L$ lower-bounds the loss of a confident error and $c_v$ is verification cost. Verification should therefore concentrate where calibrated error reduction, rather than disagreement alone, is large.

\textbf{When role correction pays.} Role correction targets dependence rather than confidence. Under uniform weights and equal marginal evidence variance, let $\bar\rho$ be the mean pairwise correlation. The effective evidence size is then $N_{\mathrm{eff}}=N/[1+(N-1)\bar\rho]$. Reducing role overlap lowers $\bar\rho$, increases $N_{\mathrm{eff}}$, and tightens concentration of the aggregate evidence. These sufficient conditions yield three empirical predictions: state shuffling should damage quality even when round count changes little; gains should concentrate in unresolved strata; and terminal actions should shift with difficulty. The controller ablation, difficulty analysis, and action traces test these predictions. Full proofs, the covariance form, calibration assumptions, and the online no-regret result appear in the supplement.

\section{Experiments}

\subsection{Setup and Metrics}

We compare four levels of interaction: Direct generation; single-trajectory Self-Refine; static debate (Multi-Debate and MAD-Debate); and workflow systems (AutoGen and MetaGPT). Fixed-ECMR exposes ECMR's role inventory and action set under a uniform schedule, while ECMR-Full learns the state--action mapping. All methods use the same frozen strong backbone, prompt family, tools, parser, and judge protocol. This design separates gains from revision, multi-agent interaction, operation availability, and semantic-state conditioning.

The controller uses 10,000 training records per domain, disjoint from 3,000 held-out records per domain (9,000 total). Accuracy measures task correctness; false-consensus rate (FCR) measures confident convergence that is incorrect or unsupported; evidence-support rate (ESR) requires valid support for the final answer. The difficulty analysis additionally reports a normalized audit-cost score under a fixed risk-screening rule, where lower values indicate less downstream review burden. Risk tags and evaluation rules are fixed before scoring. Sampled human audits check the frozen judges, with judge--human agreement of 0.81--0.86 and expected calibration error of 0.03--0.05. Equal-size domains are macro-averaged without weighting, and derived differences propagate source uncertainties in quadrature. Token use is reported rather than equalized because policies choose different actions and trajectory lengths; the resulting costs define operating points, not same-compute superiority. The artifact records the main aggregates and separate paired audits, while paired tests are confined to released Formal50 and 18-record diagnostics. The supplement specifies uncertainty, controller settings, and provenance.

\subsection{Main Held-Out Results}

Table~\ref{tab:main_results} checks the headline average in three ways. ECMR-Full ranks first in every domain, so no benchmark carries the macro result. Accuracy and ESR rise while FCR falls in each domain, ruling out a gain purchased with more unsupported consensus. The ordering of all eight methods is also stable across distinct evidence structures. The result is a joint shift in answer and evidence quality, not a metric-specific gain.

Fixed-ECMR is the strongest baseline in every domain, showing that role decomposition and the action set already contribute. ECMR-Full adds 4.6, 4.3, and 5.3 accuracy points on MedQA, CaseHOLD, and FinQA while lowering FCR by 0.05 in each. Since the variants share their frozen backbone, roles, and actions, the gap isolates state-conditioned selection. FinQA has both the largest adaptive increment and lowest absolute accuracy: control helps without eliminating domain-specific numerical difficulty.

\begin{figure*}[t]
    \centering
    \includegraphics[width=0.92\textwidth]{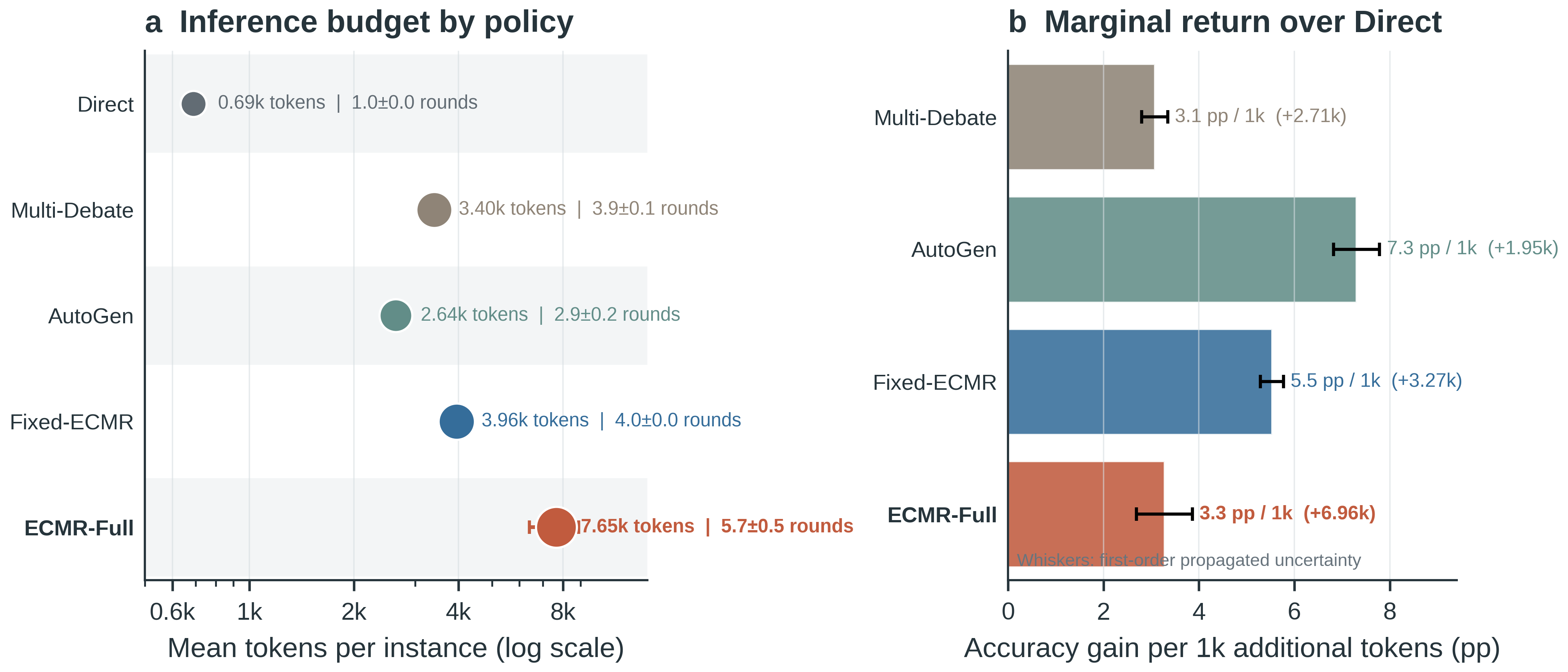}
    \caption{Inference budget and marginal return. Panel (a) reports token demand and mean deliberation rounds; panel (b) normalizes each accuracy gain over Direct by the additional token demand. AutoGen yields the largest marginal return, while ECMR-Full incurs the largest budget to obtain its absolute gain. Whiskers propagate the registered uncertainty.}
\label{fig:overall_main}
\end{figure*}

Figure~\ref{fig:overall_main} asks whether the additional interaction is used efficiently. AutoGen yields 7.3 accuracy points per 1,000 extra tokens over Direct, compared with 5.5 for Fixed-ECMR, 3.3 for ECMR-Full, and 3.1 for Multi-Debate. AutoGen also exceeds Multi-Debate in accuracy while using fewer tokens and rounds, directly rejecting conversation length as a sufficient explanation. ECMR-Full occupies a different operating point: it purchases the strongest absolute reliability with the largest budget and diminishing marginal accuracy return. Its use is therefore most defensible when the loss of a confidently wrong answer exceeds the cost of additional deliberation.

\subsection{Mechanism and Scope}

\begin{table}[!b]
\centering
\begingroup
\small
\setlength{\tabcolsep}{3.5pt}
\begin{tabular}{@{}lcccc@{}}
\toprule
Ablation & \shortstack{Acc. loss\\(pp)} & \shortstack{FCR rise\\(pp)} & \shortstack{ESR loss\\(pp)} & \shortstack{Round\\change} \\
\midrule
Fixed-Schedule & 10.5 & 12.0 & 16.0 & $-1.7$ \\
Rule-Control & 6.8 & 8.0 & 11.0 & $-0.9$ \\
No Risk Head & 5.6 & 9.0 & 8.0 & $-0.5$ \\
No Role Memory & 4.8 & 6.0 & 7.0 & $-0.3$ \\
State-Shuffled & \textbf{11.7} & \textbf{14.0} & \textbf{18.0} & $-0.7$ \\
\bottomrule
\end{tabular}
\endgroup
\caption{Controller-ablation degradation relative to ECMR-Full. Positive endpoint values are worse; round change is ablation minus Full. Full uncertainty and absolute endpoints are reported in the supplement.}
\label{tab:controller_ablation_main}
\end{table}

Table~\ref{tab:controller_ablation_main} tests whether ECMR-Full succeeds merely because it executes more operations. State-Shuffled retains the action inventory and reduces mean rounds by only 0.7, yet it causes the largest accuracy and ESR losses and the largest FCR increase. Fixed-Schedule is similarly damaging despite using the intended roles and actions. Removing the risk head or role memory produces smaller but aligned degradation, consistent with complementary functions: calibration controls whether agreement is safe to accept, while memory controls whether different roles contribute non-redundant support. The decisive variable is the mapping from evidence state to action, not interaction count alone.

\begin{figure*}[t]
    \centering
    \includegraphics[width=0.92\textwidth]{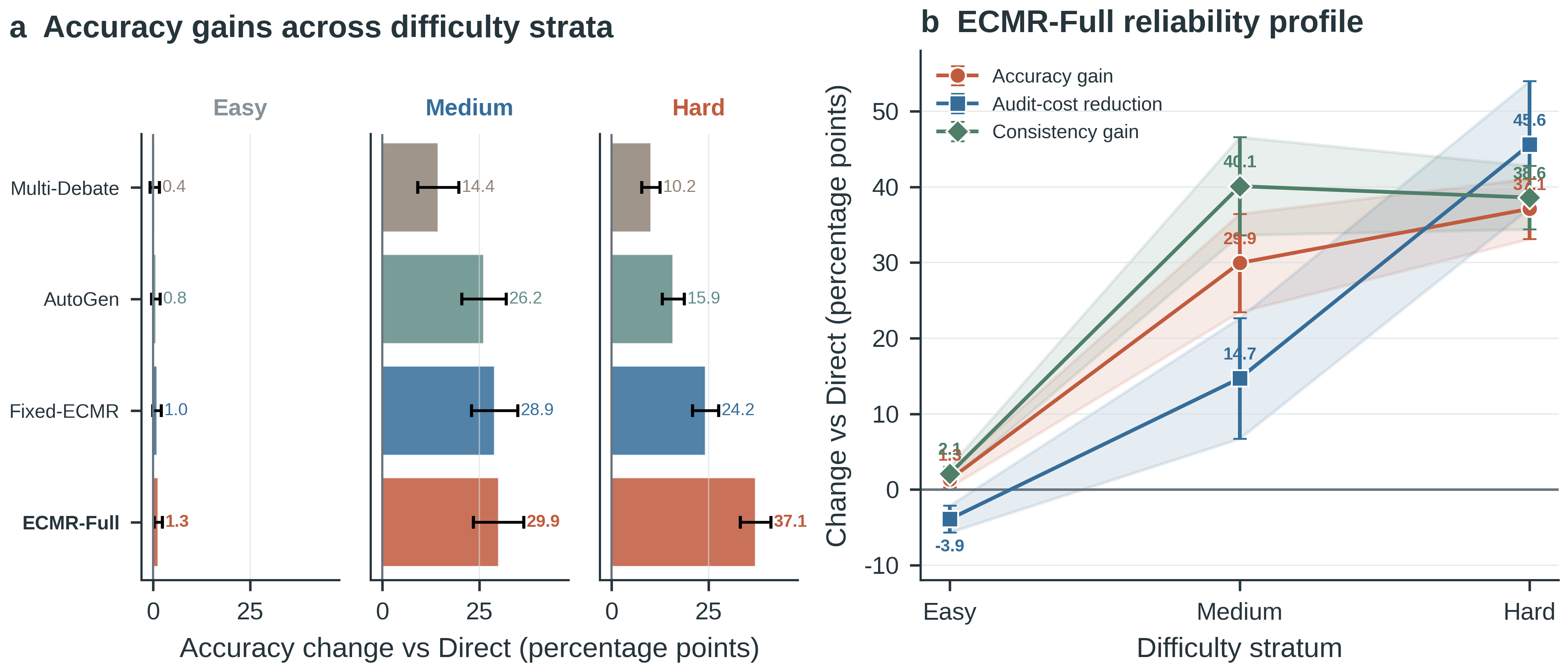}
    \caption{Where evidence-aware control pays. Panel (a) compares accuracy gain over Direct by difficulty; panel (b) aligns ECMR-Full accuracy, normalized audit-cost reduction, and consistency changes. Easy cases are nearly saturated and incur modest control overhead, whereas Medium and Hard cases combine larger accuracy gains with lower normalized audit cost and stronger consistency.}
\label{fig:difficulty_main}
\end{figure*}

Figure~\ref{fig:difficulty_main} locates where state-dependent spending has value. Easy accuracy is already near saturation, so ECMR-Full's 1.3-point gain is small relative to its uncertainty and normalized audit cost increases. Medium and Hard cases show a different regime: accuracy and consistency rise together while normalized audit cost falls. The adaptive advantage over Fixed-ECMR is modest on Easy and Medium cases but widens sharply on Hard cases, where evidence remains unresolved for longer. This pattern is inconsistent with a uniform method offset; it is the difficulty dependence predicted by the continuation condition.

Table~\ref{tab:routing_main} supplies the behavioral counterpart. Verification remains above 0.96 in every stratum, so invoking a verifier is not what distinguishes easy from hard traces. The terminal policy changes instead: early-stop share falls from 0.65 to 0.10, while arbitration rises from 0.35 to 0.90. ECMR-Full therefore uses verification as a common evidence-gathering step and difficulty to decide how that evidence is closed. The 0.55 probability-mass exchange between terminal routes aligns with the concentration of accuracy gains in Figure~\ref{fig:difficulty_main}.

\textbf{Transfer and adaptation diagnostics.} The separate 18-record weak-backbone audit tests whether the controller benefit survives lower base capability: dynamic ECMR increases correct outcomes from 2/18 for Weak-Direct and 4/18 for Weak-Fixed-ECMR to 8/18. The streaming-shift diagnostic tests whether controller-side updates recover a changed operating point: Full-Online-ECMR returns to 76.0\% recovery accuracy and 0.185 FCR in $6\pm1$ steps, whereas the frozen baseline reaches 66.7\% and 0.444 after $38\pm3$ steps. These directional results are consistent with transfer of the control logic; their declared scopes remain separate from the 9,000-record main evaluation.

\begin{table}[t]
\centering
\begingroup
\small
\setlength{\tabcolsep}{2.2pt}
\begin{tabular}{@{}lccc@{}}
\toprule
Difficulty & \shortstack{Verification\\trigger} & \shortstack{Arbitration\\exit} & \shortstack{Early-stop\\exit} \\
\midrule
Easy & .992$\pm$.008 & .350$\pm$.057 & .650$\pm$.057 \\
Medium & .983$\pm$.012 & .650$\pm$.048 & .350$\pm$.048 \\
Hard & .967$\pm$.019 & .900$\pm$.030 & .100$\pm$.030 \\
\bottomrule
\end{tabular}
\endgroup
\caption{ECMR-Full action routing by difficulty. Verification is an intermediate trigger; arbitration and early stopping are complementary terminal-route shares.}
\label{tab:routing_main}
\end{table}

\section{Discussion}

The experiments distinguish three explanations that endpoint accuracy alone cannot separate. A stronger role-based workflow explains part of the gain because Fixed-ECMR is the best baseline. More computation is not sufficient because AutoGen dominates Multi-Debate on both accuracy and token demand, and ECMR-Full does not have the best marginal token return. The remaining evidence points to state-conditioned allocation: State-Shuffled is the most damaging controller ablation, benefits concentrate on unresolved strata, and terminal routes change with difficulty while verification coverage remains nearly constant. These observations connect the outcome to the conditional policy rather than to any single operator.

The resulting operating point should be selected through the Lagrange multiplier $\eta$ and stopping threshold, not treated as universally optimal. Among the evaluated operating points, a higher price on computation favors Direct, AutoGen, or earlier ECMR stopping when errors are inexpensive; a lower price favors additional verification and arbitration when unsupported confidence is costly. Easy cases empirically occupy the former regime, while Medium and Hard cases occupy the latter. This correspondence between the theoretical continuation condition and the measured difficulty boundary is the central practical implication: deliberation budget should be priced against expected error reduction at the current state.

The 84.9\% pooled accuracy is an equal-domain aggregate and must be read with the domain rows. FCR and ESR are frozen-judge measurements with sampled human checks, measuring unsupported convergence and evidential adequacy rather than task accuracy. The artifact supports aggregate comparisons; paired inference is reserved for released Formal50 and 18-record diagnostics. Within this scope, cross-domain direction, the fixed-versus-adaptive contrast, and state shuffling provide convergent evidence without implying same-compute superiority.

Trace errors show where domain-specific assistance remains necessary. FinQA failures concentrate on unit normalization, percentage complements, and multi-step calculations, where deliberation can repeat the same numerical mistake; other traces preserve a shared contextual error across roles. Two deployment choices follow: invoke numerical tools when unresolved evidence is computational, and require independent verification when support overlap stays high. Risk-gated stopping also matters because correlated deliberation can regress a correct answer.

\section{Conclusion}

ECMR treats multi-agent reasoning as control over evidence acquisition rather than as a fixed conversation. Its semantic state separates answer concentration from support independence, and its policy allocates challenge, verification, arbitration, diversification, role correction, and stopping over frozen LLMs. Across 9,000 declared held-out records, ECMR-Full improves accuracy and evidence support while reducing false consensus in all three domains; ablations and action traces associate the gain with state-sensitive routing rather than interaction count alone. ECMR does not remove the cost of deliberation. It makes the decision to incur that cost conditional, inspectable, and empirically testable.

\clearpage
\appendix

\section{Additional Method and Theory}

The main paper concentrates three figures and three tables on the method architecture, cross-domain outcomes, marginal token return, controller ablations, difficulty-conditioned value, and action routing. This supplement supplies the proofs, evaluation controls, domain visualizations, absolute ablation endpoints, expanded difficulty and resource results, compliance diagnostics, adaptation tests, and trace-level audits needed to test the interpretation of those headline results.

\subsection{Full ECMR Algorithm}

\begin{algorithm}[t]
\small
\caption{ECMR: Evidence-Aware Deliberation Control via Controller-Side Learning}
\label{alg:ecmr}
\begin{algorithmic}[1]
\REQUIRE problem $x$; frozen role-conditioned agents $\{A_i\}_{i=1}^{N}$; action set $\mathcal{A}$; budget $B$; controller $\pi_\theta$; risk head $q_\phi$; role memory $M^{role}$
\ENSURE final answer $\widehat{y}$, deliberation trace $\tau$, and updated controller state $(\theta,\phi,M^{role})$
\STATE \textbf{Stage 1: Offline warm-start}
\STATE Collect or replay trajectories $\mathcal{D}=\{(z_t,a_t,r_t,z_{t+1})\}$ from previous deliberations.
\STATE Compute judge labels $F_t,E_t$, role-compliance score $\chi_t$, and cost $c_t$ for each step.
\STATE Set $r_t=R_{\mathrm{task}}+\lambda_{esr}E_t-\lambda_{fc}F_t-\lambda_{drift}(1-\chi_t)-\lambda_{cost}c_t$.
\STATE Update $q_\phi$ by calibrated binary loss on $F_t$ and update $\pi_\theta$ by clipped actor--critic or REINFORCE.
\STATE Refresh role prototypes by exponential moving average: $M^{role}_k\leftarrow(1-\beta)M^{role}_k+\beta\,\psi(u_t^k)$.
\STATE \textbf{Stage 2: Adaptive deliberation}
\STATE Initialize posterior $p_0(y)$, budget $b_0\leftarrow B$, trace $\tau\leftarrow\emptyset$, and state $z_0$.
\FOR{$t=0,1,\ldots$ while $b_t>0$}
    \STATE Compute semantic state $z_t$ from the current trace, role memory, and budget.
    \STATE Estimate $\widehat{q}^{fc}_t\leftarrow q_\phi(z_t)$ and select $a_t\sim\pi_\theta(\cdot\mid z_t)$.
    \IF{$a_t=\texttt{stop}$}
        \IF{$\widehat{q}^{fc}_t<\tau_{stop}$}
            \STATE \textbf{break}
        \ELSE
            \STATE Resample $a_t$ from $\pi_\theta(\cdot\mid z_t)$ restricted to $\mathcal{A}\setminus\{\texttt{stop}\}$.
        \ENDIF
    \ENDIF
    \STATE Execute $a_t$, collect semantic acts $\{u_t^i\}$, and update posterior logits using support and verification signal $v_t(y)$.
    \STATE Update role drift, role memories, verification memory, remaining budget, and trace $\tau$.
\ENDFOR
\STATE \textbf{Stage 3: Conservative online update}
\STATE Append raw predictions, token usage, action traces, judge outputs, and optional human labels to JSONL logs.
\STATE Refresh calibration on the last $W$ judged traces and update role memories; update policy only if development-set FCR and ECE do not degrade.
\STATE \textbf{return} $\widehat{y}=\argmax_y p_t(y)$ and trace $\tau$.
\end{algorithmic}
\end{algorithm}

\subsection{Semantic State Features and Controller Defaults}

\begin{table*}[!htbp]
\centering
\begingroup

\begin{tabular}{p{0.13\textwidth}p{0.23\textwidth}p{0.54\textwidth}}
\toprule
Feature & Name & Operational definition \\
\midrule
$g_t$ & answer concentration & maximum posterior mass over answer clusters \\
$\widetilde{H}^{sup}_t$ & support entropy & normalized entropy over independent support clusters \\
$I^{ind}_t$ & evidence independence & mean pairwise dissimilarity among evidence embeddings after deduplication \\
$C^{ctr}_t$ & contradiction structure & weighted count/severity of mutually inconsistent claims \\
$D^{role}_t$ & role drift & distance between observed role acts and role-memory prototype \\
$O^{role}_t$ & role overlap & semantic overlap across agents that should play different roles \\
$b_t$ & remaining budget & normalized remaining token/tool/round budget \\
$\nu_t$ & verification memory & running summary of verifier outcomes and failed checks \\
$m_{t-1}$ & previous action & one-hot encoding or learned embedding of the previous deliberation action\\
\bottomrule
\end{tabular}
\endgroup

\caption{ECMR semantic meta-state features. Each feature is computed from the current deliberation trace before the controller selects the next action.}
\label{tab:state_features}
\end{table*}

\begin{table*}[!htbp]
\centering
\begingroup
\small
\begin{tabular}{p{0.18\textwidth}p{0.74\textwidth}}
\toprule
Component & Setting \\
\midrule
Risk head & Two-layer MLP or logistic head over normalized semantic state; calibrated on dev with temperature scaling\\
Role memory & Exponential moving average (EMA) over role semantic-act embeddings with $\beta_{\mathrm{role}}=0.10$ \\
Policy update & Off-policy actor-critic / REINFORCE over trajectory tuples with clipped importance weights \\
Reward weights & $\lambda_{fc}=0.35$, $\lambda_{drift}=0.20$, $\lambda_{cost}=0.10$, $\lambda_{esr}=0.25$ \\
Thresholds & $\tau_{stop}=0.20$, $\tau_{verify}=0.55$, $\tau_{contradiction}=0.40$, $\tau_{role}=0.35$, $I_{min}^{ind}=0.45$ \\
Online update & EMA rate $0.05$, memory retrieval $k=16$, calibration window $200$ \\
\bottomrule
\end{tabular}
\endgroup

\caption{Default controller settings used in the ECMR experiments. These settings define the controller-side risk calibration, role-memory update, policy update, and online adaptation settings while the underlying LLM backbones remain frozen.}
\label{tab:controller_defaults}
\end{table*}

\FloatBarrier

\paragraph{Interpretation.}
Tables~\ref{tab:state_features} and~\ref{tab:controller_defaults} show that ECMR does not treat confidence as a sufficient stopping signal. Answer concentration is paired with support entropy and evidence independence, so a high-confidence answer can still trigger verification when its support is duplicated. Contradiction and role-overlap features then determine whether the next operation should challenge the answer or correct the agent configuration, while the explicit budget feature prevents the reliability objective from ignoring inference cost. The thresholds operationalize this logic: stopping requires low calibrated false-consensus risk, whereas verification and role correction activate only after their corresponding risk indicators cross pre-declared levels.

\subsection{Proofs and Additional Theory}
\label{app:proofs}

\begin{proof}[Semantic-adaptivity proof]
The weak inequality follows from set inclusion: any fixed schedule can be represented by a state policy that ignores $z_t$ and chooses $f(t)$. For strictness, suppose there exist reachable states $z$ and $z'$ with $\Delta_\eta(z)>0$ and $\Delta_\eta(z')<0$. A fixed schedule that continues in both states incurs loss at least $-\Delta_\eta(z')$ in state $z'$, while a fixed schedule that stops in both states incurs loss at least $\Delta_\eta(z)$ in state $z$. A state-aware policy can instead continue in $z$ and stop in $z'$, thereby achieving a strictly larger expected Lagrangian value on a positive-measure event. Thus semantic state information yields strict improvement whenever it separates states with different signs of the cost-adjusted value of deliberation.
\end{proof}

\begin{proof}[Verification-utility proof]
The calibration lower envelope gives $\Pr(F_t\mid z_t)\ge \alpha_1 r_t^{fc}-\alpha_0\ge \alpha_1\kappa-\alpha_0$ in any state with $r_t^{fc}\ge\kappa$. If verification reduces this probability by factor $\rho$, the conditional reduction is at least $(1-\rho)(\alpha_1\kappa-\alpha_0)$. Multiplying by the confident-error loss lower bound $L$ gives a utility gain of at least $L(1-\rho)(\alpha_1\kappa-\alpha_0)$. The action is cost-beneficial under the Lagrangian objective when this gain exceeds the cost penalty $\eta c_v$.
\end{proof}

\begin{proof}[Effective-evidence proof]
For a weighted evidence average with covariance matrix $\Sigma$, the aggregate variance is proportional to $\sum_{i,j}w_iw_j\Sigma_{ij}/(\sum_i w_i)^2$, which is $\sigma^2/N_{eff}$ by definition. With uniform weights and average off-diagonal correlation $\bar{\rho}$, the denominator becomes $N+N(N-1)\bar{\rho}$ after normalization, giving $N_{eff}=N/(1+(N-1)\bar{\rho})$. Reducing the average off-diagonal correlation to $\bar{\rho}'<\bar{\rho}$ strictly increases this quantity. Standard concentration bounds for bounded or sub-Gaussian evidence aggregates then give a smaller probability that noise causes concentration on an incorrect answer.
\end{proof}

\begin{theorem}[When adaptive verification is utility improving]
\label{thm:adaptive_verification}
Consider a state $z$ with false-consensus proxy $r^{fc}(z)\ge\kappa$ and role-overlap level inducing effective evidence size $N_{eff}(z)$. Suppose one verification or challenge action reduces false-consensus probability by factor $\rho_v<1$ and role correction increases the effective evidence size from $N_{eff}$ to $N_{eff}'$. If the loss from confident error is at least $L$ and evidence noise is sub-Gaussian with scale $\sigma^2/N_{eff}$, then the cost-adjusted value of continuing is positive whenever
\[
\begin{aligned}
&L(1-\rho_v)(\alpha_1\kappa-\alpha_0)\\
&\quad+L\left[\exp\!\left(-\frac{\Delta^2N_{eff}'}{2\sigma^2}\right)
-\exp\!\left(-\frac{\Delta^2N_{eff}}{2\sigma^2}\right)\right]_{-}\\
&>\eta(c_v+c_{corr}),
\end{aligned}
\]
where $[u]_{-}=\max\{-u,0\}$ and $\Delta$ is the evidence margin between the true and competing answer clusters.
\end{theorem}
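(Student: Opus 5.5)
The plan is to treat the combined continuation as the composite operation "verify (or challenge), then correct the role," and to lower-bound its Lagrangian value by adding two separately derived utility gains. Continuation is then compared with stopping through $\Delta_\eta(z)$ from the semantic-adaptivity result. Since $Q_\eta(z,\texttt{continue})$ is a maximum over non-stop actions, it is at least the value of this particular composite. So it is enough to show that the composite's expected utility improvement over stopping exceeds its cost penalty $\eta(c_v+c_{corr})$. Throughout, I will assume that the two effects act on disjoint or ordered components of the terminal loss, so that their gains can be added: verification acts on the confident-error event $F$, and correction acts on the noise-driven misconcentration event.

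\textbf{Step 1 (verification term).} Here I reuse the verification-utility proof verbatim. The calibration envelope gives $\Pr(F\mid z)\ge\alpha_1\kappa-\alpha_0$ whenever $r^{fc}(z)\ge\kappa$. Contraction by $\rho_v$ then removes at least $(1-\rho_v)(\alpha_1\kappa-\alpha_0)$ of this probability. Multiplying by the loss bound $L$ yields the first summand.

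\textbf{Step 2 (correction term).} Let the aggregate evidence be $\bar S=\sum_i w_i S_i/\sum_i w_i$, with mean margin $\Delta$ in favour of $y^*$. By the effective-evidence proof, $\bar S$ is sub-Gaussian with variance proxy $\sigma^2/N_{eff}$. The Chernoff bound gives $\Pr(\text{competing cluster wins})\le\exp(-\Delta^2N_{eff}/(2\sigma^2))$. I will take these exponentials as the error-probability profile at each effective size, which is the standard idealization used in the paper's concentration statement. Correction moves $N_{eff}$ to $N_{eff}'$, so the error probability changes by $\exp(-\Delta^2N_{eff}'/2\sigma^2)-\exp(-\Delta^2N_{eff}/2\sigma^2)$. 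Only a decrease counts as a gain, so the guaranteed improvement is $L[\,\cdot\,]_-$. This quantity is nonnegative and vanishes if $N_{eff}'\le N_{eff}$.

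\textbf{Step 3 (assembly).} I sum the two gains, subtract $\eta(c_v+c_{corr})$, and conclude that $\Delta_\eta(z)>0$ under the displayed inequality.

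The main obstacle is Step 2. A sub-Gaussian tail gives only an upper bound on the error probability, so the difference of two upper bounds is not itself a rigorous lower bound on the actual reduction. I would first try to state explicitly that the bound is treated as tight, i.e., that the error probability is modeled as this exponential in $N_{eff}$, consistent with the paper's "standard concentration" phrasing. A fallback is to replace the pre-correction term by a matching lower tail bound, for example Gaussian anti-concentration, at the cost of constants.

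A secondary issue is the additivity of the two gains, since they could overlap on the same error event. I would handle this by applying verification after correction and noting that the contraction factor $\rho_v$ is stated relative to the current state. If that does not go through, I would state additivity as an assumption on disjoint loss components.
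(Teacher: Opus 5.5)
Your proposal follows the paper's proof essentially step for step (import the verification-utility bound for the first term, compare the sub-Gaussian tail bounds before and after correction for the second, then sum and subtract $\eta(c_v+c_{corr})$); the paper likewise just asserts that the tail probability ``decreases by the bracketed amount'' and adds the two gains, so the tightness and additivity assumptions you propose to state are exactly what its argument tacitly uses. Two small cautions: your anti-concentration fallback would change constants and so would not yield the displayed inequality, and because you phrase the conclusion through $\Delta_\eta$, the per-step penalties $\lambda_{fc}r^{fc}_t$ and $\lambda_{drift}D^{role}_t$ accrued by the extra steps are also left unaccounted for (as in the paper).
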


\begin{proof}
The first term is the false-consensus reduction established by the verification-utility condition in the main paper. The sub-Gaussian tail bound for a wrong evidence aggregate is at most $\exp(-\Delta^2N_{eff}/(2\sigma^2))$ before correction and $\exp(-\Delta^2N_{eff}'/(2\sigma^2))$ after correction. Because $N_{eff}'>N_{eff}$ under reduced role overlap, the tail probability decreases by the bracketed amount. Multiplying both reductions by the lower bound $L$ on confident-error loss and subtracting the Lagrangian costs $\eta(c_v+c_{corr})$ yields a positive continuation advantage under the stated inequality.
\end{proof}

Suppose semantic states are discretized into $K$ bins and rewards are bounded in $[0,1]$ after subtracting cost penalties. Let $M=|\mathcal{A}|$. If the controller updates its state-conditioned action distribution with a standard no-regret rule on observed trajectory tuples $(z_t,a_t,r_t,z_{t+1})$, then for $T$ online deliberation decisions its regret against the best fixed state-aware policy in hindsight satisfies
\[
\begin{aligned}
\mathrm{Reg}_T
&=\max_{\pi\in\Pi_{state}}\sum_{t=1}^{T} r_t(z_t,\pi(z_t))\\
&\quad-\sum_{t=1}^{T}r_t(z_t,a_t)
=O\!\left(\sqrt{T K\log M}\right),
\end{aligned}
\]
up to constants determined by the feedback estimator. Hence the average regret is $O(\sqrt{K\log M/T})$, which vanishes as more trajectories are observed. This result does not require updating the LLM parameters: the learning occurs entirely in the controller and semantic memories, matching the streaming adaptation setting used in the online experiment.

\section{Additional Experimental Details}
\label{app:additional_experiments}

\subsection{Controller and Reproducibility Details}

\begin{table*}[!htbp]
\centering
\begingroup
\small
\begin{tabular}{p{0.18\textwidth}p{0.74\textwidth}}
\toprule
Item & Recorded value \\
\midrule
Public data pools & Public MedQA, CaseHOLD/lawQA, and FinQA releases normalized to a common record schema before training and evaluation\\
Controller training set & 10,000 declared training records per domain (30,000 total), separate from the held-out evaluation\\
Held-out evaluation pool & 3,000 declared evaluation records per domain (9,000 total) for the main MedQA, CaseHOLD, and FinQA evaluation\\
Formal50 audit & 50 examples per domain, used only for paired audit displays and ungated/risk-gated boundary analysis\\
Model versions & Strong backbone \texttt{gpt-5.4}; weak-backbone diagnostic uses the available GPT-4-level \texttt{gpt-4-turbo} model\\
Controller checkpoint & Anonymous controller checkpoint released with the supplementary artifact \\
Raw predictions & Aggregate held-out records plus released Formal50 and small-audit pairs; the current artifact does not include a complete 9,000-row prediction file \\
Uncertainty reporting & Main and resource tables retain recorded across-run uncertainty; controller and compliance rates use binomial standard error; small diagnostics retain their declared uncertainty \\
Artifact build & The anonymous artifact rebuild regenerates the manifest, tables, figures, and consistency audit from the declared aggregate records \\
\bottomrule
\end{tabular}
\endgroup

\caption{Reproducibility manifest for the latest ECMR experiment build. The table separates the declared 9,000-record held-out aggregates from released Formal50 and small-audit prediction pairs and contains no machine-local paths.}
\label{tab:reproducibility_manifest}
\end{table*}

\paragraph{Interpretation.}
Table~\ref{tab:reproducibility_manifest} links every reported aggregate to a declared data scope, model version, anonymous checkpoint, uncertainty rule, and reproducible build stage without exposing machine-local paths. Because the current release lacks a complete 9,000-row prediction file, the audit does not claim a fresh raw-record reaggregation. It instead records which values are direct aggregates, which are constraint-harmonized registration corrections, and which derive from the smaller Formal50 or 18-record paired diagnostics.

\subsection{Evaluation and Baseline Protocols}

\begin{table*}[!htbp]
\centering
\begingroup
\small
\setlength{\tabcolsep}{2.5pt}
\begin{tabular}{p{0.08\textwidth}p{0.27\textwidth}p{0.12\textwidth}p{0.10\textwidth}p{0.06\textwidth}p{0.25\textwidth}}
\toprule
Label & Protocol & Judge--human agr. & Human agr. & ECE & Dominant errors \\
\midrule
FCR & Frozen judge + sampled human audit on core-pool high-risk and Formal50 traces& 0.84 & 0.82 & 0.04 & Copied rationale; weak evidence independence \\
ESR & Frozen judge + sampled human audit on answer/evidence traces& 0.81 & 0.79 & 0.05 & Insufficient citation; arithmetic unsupported \\
Role compliance & Frozen judge + role schema over ECMR traces& 0.86 & 0.83 & 0.03 & Verifier agreement bias; challenger passivity \\
\bottomrule
\end{tabular}
\endgroup

\caption{Independent evaluation audit for FCR, ESR and role-compliance labels. Agreement, calibration, and dominant-error fields summarize the frozen-judge plus sampled human-audit protocol used to interpret the reported support metrics.}
\label{tab:judge_audit}
\end{table*}

\begin{table*}[!htbp]
\centering
\begingroup
\small
\setlength{\tabcolsep}{2.5pt}
\begin{tabular}{p{0.12\textwidth}p{0.15\textwidth}p{0.18\textwidth}p{0.22\textwidth}p{0.23\textwidth}}
\toprule
Baseline & Backbone & Budget & Tools & Control rule \\
\midrule
Direct & gpt-5.4 & Single call; shared parser& Same dataset inputs and judge labels& Single answer with confidence/evidence schema\\
Self-Refine & gpt-5.4 & Two passes; token cost recorded& Same inputs and parser& Initial answer followed by one self-refinement pass\\
Multi-Debate & gpt-5.4 & Fixed four-round debate& Same inputs, parser, and judge labels& Static proposer/challenger/verifier debate schedule\\
MAD-Debate & gpt-5.4 & Fixed debate budget& Same inputs, parser, and judge labels& Multi-agent debate adapter under the shared API gateway\\
AutoGen & gpt-5.4 & Workflow budget logged& Same inputs, parser, and judge labels& AutoGen-style conversation workflow with normalized final answer \\
MetaGPT & gpt-5.4 & Workflow budget logged& Same inputs, parser, and judge labels& Role-structured workflow baseline with normalized final answer\\
Fixed-ECMR & gpt-5.4 & Fixed ECMR schedule& Same inputs, parser, and judge labels& ECMR roles and actions without learned state-conditioned control\\
ECMR-Full & gpt-5.4 & Up to six rounds; dynamic budget& Same inputs, parser, and judge labels& Learned semantic controller with risk head and role memory\\
\bottomrule
\end{tabular}
\endgroup

\caption{Baseline fairness protocol for the 3,000-per-domain held-out evaluation suite. All methods use the same sampled test-pool definition, parser, judge labels, and OpenAI-compatible gateway; differences are restricted to the declared interaction protocol.}
\label{tab:baseline_fairness}
\end{table*}

\begin{table*}[!htbp]
\centering
\begingroup
\small
\begin{tabular}{p{0.18\textwidth}p{0.74\textwidth}}
\toprule
Component & Auditable setting \\
\midrule
Direct & Single frozen-backbone answer call with the shared task prompt, answer parser, and judge/audit protocol \\
Self-Refine & Same backbone produces an initial answer followed by one self-refinement pass under the shared output schema\\
Multi-Debate & Static proposer/challenger/verifier debate schedule with matched inputs and parser \\
MAD-Debate & Multi-agent debate baseline instantiated with the same API gateway, model, datasets, and evaluation scripts \\
AutoGen & Workflow-style multi-agent conversation adapter using the same prompt family, context policy, and normalized final answer schema\\
MetaGPT & Role-structured workflow baseline using the same task inputs, output normalization and judge protocol\\
Fixed-ECMR & ECMR role/action set with a fixed hand-designed schedule and no learned semantic controller \\
ECMR-Full & Learned semantic controller with false-consensus risk head and role-memory feedback, while keeping the LLM backbones frozen\\
\bottomrule
\end{tabular}
\endgroup

\caption{Operational baseline configuration. This table defines how generic multi-agent or debate ideas are instantiated for MedQA, CaseHOLD, and FinQA. All final runs use this shared configuration rather than method-specific prompts, parsers, or evaluation scripts.}
\label{tab:baseline_configuration}
\end{table*}

\begin{table*}[!htbp]
\centering
\begingroup
\small
\begin{tabular}{p{0.18\textwidth}p{0.74\textwidth}}
\toprule
Item & Setting \\
\midrule
Main held-out scope & The 9,000-record results are released as aggregates; no paired p-value or bootstrap confidence interval is claimed without the complete aligned prediction file \\
Paired diagnostic unit & Dataset--record pairs for the released 150-record Formal50 and 18-record high-risk/weak-backbone audits \\
Primary tests & Exact McNemar tests and paired intervals are used only where aligned win/loss records are explicitly released \\
Multiple comparisons & Table-family correction is applied when multiple claims from the same table are interpreted jointly\\
Scope of inference & All statistical claims are benchmark-reliability claims under the declared 3,000-per-domain held-out test protocol and Formal50 audit protocol, not deployment-safety claims\\
\bottomrule
\end{tabular}
\endgroup

\caption{Statistical testing protocol. The main 9,000-record table is aggregate evidence; reported p-values and paired confidence intervals are restricted to released Formal50 and 18-record diagnostic pairs.}
\label{tab:statistical_protocol}
\end{table*}

\begin{table*}[!htbp]
\centering
\begingroup
\small
\begin{tabular}{p{0.20\textwidth}p{0.72\textwidth}}
\toprule
Subset or tag & Generation rule \\
\midrule
Held-out evaluation pool & Each of MedQA, CaseHOLD, and FinQA contributes 3,000 declared evaluation records; risk tags and audit selections are fixed before ECMR outputs are scored \\
Difficulty score & $0.50\times$ baseline error rate $+0.25\times$ method disagreement $+0.15\times$ input-length tier $+0.10\times\max($low-confidence rate, FCR rate$)$ \\
Difficulty strata & The pooled 9,000 records are ordered by the pre-declared difficulty score and divided into three equal 3,000-record bins \\
Trace-audit panel & A separate qualitative panel contains 3 Easy, 3 Medium, and 7 Hard examples and does not determine aggregate stratum means \\
Risk tags & High disagreement: 6; evidence conflict: 9; long context: 4; ambiguous options: 5; multi-step dependency: 5; low confidence: 4 \\
Formal50 boundary & Formal50 is an auxiliary paired audit pool and does not replace the 3,000-per-domain held-out test pool\\
Selection rule & No record is included or excluded based on whether ECMR succeeds; aggregate registration corrections are documented separately \\
\bottomrule
\end{tabular}
\endgroup

\caption{Difficulty and risk-flagged subset definition. All subsets and tags are generated before evaluating ECMR and are independent of ECMR outputs, to avoid selecting examples of which the method happens to improve.}
\label{tab:subset_definition}
\end{table*}

\begin{table*}[!htbp]
\centering
\begingroup
\small
\begin{tabular}{p{0.24\textwidth}p{0.68\textwidth}}
\toprule
Claim type & Allowed interpretation \\
\midrule
Within-backbone claim & Weak-Direct vs Weak-ECMR isolates the effect of deliberation control under a weak frozen backbone\\
Strong baseline claim & Strong-Direct vs Strong-ECMR isolates the effect of deliberation control under a strong frozen backbone\\
Weak-vs-strong comparison & Weak-ECMR vs Strong-Direct provides descriptive cost--quality context only; it does not isolate controller effects from model scale\\
Budget normalization & Report raw token usage and interpret unequal-token comparisons as cost--quality measurements rather than causal controller effects\\
Final requirement & Claims about controller effectiveness must be supported by within-backbone, budget-aware, paired-example results\\
\bottomrule
\end{tabular}
\endgroup

\caption{Cross-backbone interpretation protocol. Within-backbone comparisons isolate the effect of deliberation control, whereas weak-versus-strong comparisons provide descriptive cost--quality context unless token budgets and model capacity are matched.}
\label{tab:cross_backbone_fairness}
\end{table*}

\begin{table*}[!htbp]
\centering
\begingroup
\small
\begin{tabular}{p{0.18\textwidth}p{0.74\textwidth}}
\toprule
Item & Setting \\
\midrule
Stream source & Separate 150-record Formal50 stream spanning MedQA, CaseHOLD, and FinQA; it does not inherit the main evaluation's 9,000-record scope\\
Shift point & Pre-declared streaming distribution shift at step 20\\
Allowed updates & Only controller calibration, role memories, and policy/risk heads are updated online; LLM backbones and prompts remain frozen\\
Evaluation endpoints & Recovery curves use the same Acc/FCR endpoints and the same declared feedback/update protocol across methods\\
Recovery metric & Number of post-shift steps required to return near the pre-shift operating point of accuracy and FCR \\
Reporting & Report pre-shift, immediate post-shift, and recovery Acc/FCR together with recovery steps; retain the complete 30-step trajectories in the anonymous artifact\\
\bottomrule
\end{tabular}
\endgroup

\caption{Streaming distribution-shift protocol. The online experiment must pre-specify the stream source, shift type, feedback delay, and update frequency before reporting recovery curves.}
\label{tab:streaming_shift_protocol}
\end{table*}

\paragraph{Interpretation.}
Tables~\ref{tab:judge_audit}--\ref{tab:streaming_shift_protocol} define what the empirical comparisons can isolate. Table~\ref{tab:judge_audit} reports judge--human agreement between 0.81 and 0.86 with ECE between 0.03 and 0.05. Tables~\ref{tab:baseline_fairness} and~\ref{tab:baseline_configuration} hold the backbone, parser, inputs, and evaluation protocol fixed. Table~\ref{tab:statistical_protocol} limits paired inference to released diagnostic pairs, while Table~\ref{tab:subset_definition} fixes difficulty and risk tags before ECMR is scored. Finally, Tables~\ref{tab:cross_backbone_fairness} and~\ref{tab:streaming_shift_protocol} keep cross-model and online claims within their smaller declared scopes.

\FloatBarrier
\subsection{Additional High-Risk and Mechanism Results}

\begin{table*}[!htbp]
\centering
\begingroup
\small
\setlength{\tabcolsep}{3pt}
\begin{tabular}{p{0.17\textwidth}p{0.13\textwidth}p{0.14\textwidth}c c p{0.11\textwidth}c}
\toprule
Endpoint & Baseline & ECMR variant & $\Delta$ (pp) & Paired rows & Test record & 95\% CI \\
\midrule
Formal50 risk-gated Acc & Direct 74.7 & Risk-gated ECMR 76.0 & +1.3 & 150 & McNemar $p=0.6875$ & [-2.0, 4.7] \\
Weak-backbone Acc & Weak-Direct 11.1 & Weak-ECMR 44.4 & +33.3 & 18 & McNemar $p=0.0312$ & [11.1, 55.6] \\
Weak dynamic vs. fixed Acc & Weak-Fixed-ECMR 22.2 & Weak-ECMR 44.4 & +22.2 & 18 & McNemar $p=0.1250$ & [5.6, 44.4] \\
\bottomrule
\end{tabular}
\endgroup

\caption{Paired checks for the separate Formal50 and weak-backbone audits. $\Delta$ is ECMR accuracy minus baseline accuracy; confidence intervals are paired 95\% intervals in percentage points. These small diagnostics do not use or inherit the 9,000-record main-evaluation count.}
\label{tab:significance}
\end{table*}

\paragraph{Interpretation.}
Table~\ref{tab:significance} separates two operating regimes. On Formal50, risk gating changes accuracy by 1.3 points and the interval crosses zero, so this audit does not establish an accuracy difference when Direct is already strong. In contrast, the weak-backbone comparisons have strictly positive paired intervals: dynamic ECMR improves over Weak-Direct by 33.3 points and over Weak-Fixed-ECMR by 22.2 points. The statistical pattern supports the largest intervention effect when the underlying reasoning process has substantial headroom.

\paragraph{Connection to the main reliability--cost view.}
Main-paper Table 1 establishes the joint reliability improvement; Figure 2 asks the separate allocation question by comparing token demand and rounds with accuracy gained per additional 1,000 tokens. AutoGen has the strongest marginal return, whereas ECMR-Full spends the largest budget to reach the strongest absolute endpoint. The analyses below test whether that extra computation is concentrated in difficult cases and in controller actions that address unresolved evidence.

\begin{table*}[!htbp]
\centering
\begingroup
\small
\setlength{\tabcolsep}{2.8pt}
\begin{tabular}{>{\raggedright\arraybackslash}p{0.12\textwidth}*{3}{>{\centering\arraybackslash}p{0.22\textwidth}}>{\centering\arraybackslash}p{0.12\textwidth}}
\toprule
Method & \shortstack{Easy\\Acc / Cost / Cons.} & \shortstack{Medium\\Acc / Cost / Cons.} & \shortstack{Hard\\Acc / Cost / Cons.} & \shortstack{Hard $\Delta$Acc\\vs Direct (pp)} \\
\midrule
Direct & 97.42$\pm$0.86 / 0.042$\pm$0.011 / 0.963$\pm$0.008 & 60.00$\pm$3.25 / 0.574$\pm$0.063 / 0.158$\pm$0.031 & 28.98$\pm$1.13 / 0.692$\pm$0.075 / 0.021$\pm$0.009 & --- \\
Multi-Debate & 97.85$\pm$0.79 / 0.058$\pm$0.013 / 0.971$\pm$0.007 & 74.41$\pm$4.18 / 0.521$\pm$0.058 / 0.305$\pm$0.042 & 39.14$\pm$2.07 / 0.647$\pm$0.068 / 0.122$\pm$0.023 & 10.16$\pm$2.36 \\
AutoGen & 98.17$\pm$0.72 / 0.075$\pm$0.015 / 0.976$\pm$0.006 & 86.18$\pm$4.73 / 0.613$\pm$0.066 / 0.387$\pm$0.047 & 44.85$\pm$2.61 / 0.725$\pm$0.071 / 0.179$\pm$0.028 & 15.87$\pm$2.84 \\
Fixed-ECMR & 98.43$\pm$0.68 / 0.066$\pm$0.012 / 0.980$\pm$0.006 & 88.95$\pm$5.04 / 0.486$\pm$0.053 / 0.464$\pm$0.051 & 53.22$\pm$3.15 / 0.803$\pm$0.064 / 0.258$\pm$0.033 & 24.24$\pm$3.35 \\
\textbf{ECMR-Full} & \textbf{98.76$\pm$0.61 / 0.081$\pm$0.014 / 0.984$\pm$0.005} & \textbf{89.95$\pm$5.62 / 0.427$\pm$0.049 / 0.559$\pm$0.057} & \textbf{66.09$\pm$3.82 / 0.236$\pm$0.038 / 0.407$\pm$0.041} & \textbf{37.11$\pm$3.98} \\
\bottomrule
\end{tabular}
\endgroup

\caption{Difficulty-stratified results over three equal 3,000-record bins. Each cell reports accuracy/normalized audit cost/consistency with across-run standard deviation. Accuracy means are registry-corrected so their equal-bin macro average matches main-paper Table 1; Hard $\Delta$Acc is relative to Direct.}
\label{tab:difficulty}
\end{table*}

\paragraph{Interpretation.}
Table~\ref{tab:difficulty} localizes the mechanism gain while remaining arithmetically aligned with the pooled result. Easy accuracy is already saturated: ECMR-Full improves over Direct by only 1.34 points, while its normalized audit cost rises from 0.042 to 0.081. The gain increases to 29.95 points on Medium cases and 37.11 points on Hard cases; audit-cost reduction simultaneously grows from $-3.9$ points on Easy to 45.6 points on Hard. On Hard cases, Fixed-ECMR reaches 53.22 accuracy at an audit cost of 0.803, whereas ECMR-Full reaches 66.09 at less than one-third of that score and raises consistency from 0.258 to 0.407. This score is distinct from token use, which is higher for ECMR.

\paragraph{Connection to the main controller ablation.}
Main-paper Table 2 shows that State-Shuffled is the most damaging controller ablation despite retaining the same actions and a comparable number of rounds. The expanded diagnostics below test the two implicated pathways separately: role-compliance variants probe the memory/correction path, while the full resource table compares ECMR's routing pattern with fixed and generic multi-agent baselines.

\begin{table*}[!htbp]
\centering
\begingroup
\small
\setlength{\tabcolsep}{5pt}
\begin{tabular}{lccccc}
\toprule
Controller Variant & \shortstack{Accuracy\\(\%)} & FCR & ESR & \shortstack{Mean\\Rounds} & \shortstack{$\Delta$Accuracy vs.\\ECMR-Full (pp)} \\
\midrule
Fixed-Schedule & 74.4$\pm$0.46 & 0.290$\pm$0.0048 & 0.727$\pm$0.0047 & 4.0$\pm$0.0 & 10.5 \\
Rule-Control & 78.1$\pm$0.44 & 0.250$\pm$0.0046 & 0.777$\pm$0.0044 & 4.8$\pm$0.3 & 6.8 \\
No Risk Head & 79.3$\pm$0.43 & 0.260$\pm$0.0046 & 0.807$\pm$0.0042 & 5.2$\pm$0.3 & 5.6 \\
No Role Memory & 80.1$\pm$0.42 & 0.230$\pm$0.0044 & 0.817$\pm$0.0041 & 5.4$\pm$0.3 & 4.8 \\
State-Shuffled & 73.2$\pm$0.47 & 0.310$\pm$0.0049 & 0.707$\pm$0.0048 & 5.0$\pm$0.4 & 11.7 \\
\textbf{ECMR-Full} & \textbf{84.9$\pm$0.38} & \textbf{0.170$\pm$0.0040} & \textbf{0.887$\pm$0.0033} & \textbf{5.7$\pm$0.5} & \textbf{0.0} \\
\bottomrule
\end{tabular}
\endgroup
\caption{Absolute controller-ablation endpoints corresponding to main-paper Table 2. Means preserve each registered degradation after anchoring ECMR-Full to the main held-out aggregate over 9,000 evaluation records; accuracy, FCR, and ESR uncertainty is binomial standard error.}
\label{tab:controller_ablation_supp}
\end{table*}

\FloatBarrier
\paragraph{Interpretation.}
Table~\ref{tab:controller_ablation_supp} separates state information from interaction volume. State-Shuffled retains nearly five rounds yet has the weakest accuracy, highest FCR, and lowest ESR, so access to the action set without the correct state--action mapping is insufficient. Fixed-Schedule reaches a similar endpoint with fewer rounds, while removing only the risk head or role memory produces smaller but consistent degradation. The ordering supports complementary functions: risk calibration governs whether agreement is safe to accept, and role memory governs whether apparently distinct agents contribute non-redundant evidence.

\paragraph{Connection to the main difficulty view.}
Main-paper Figure 3 visualizes Table~\ref{tab:difficulty}: Easy cases provide little accuracy headroom and incur control overhead, whereas Medium and Hard cases combine larger gains with lower normalized audit cost and stronger consistency. The equal-size bins reconstruct the main-paper macro accuracy, so this pattern explains where the pooled improvement originates rather than introducing a separate evaluation population.

\subsection{Held-Out and Cross-Backbone Diagnostics}

\begin{table*}[!htbp]
\centering
\begingroup
\small
\setlength{\tabcolsep}{2pt}
\begin{tabular}{>{\raggedright\arraybackslash}p{0.15\textwidth}*{4}{>{\centering\arraybackslash}p{0.15\textwidth}}>{\centering\arraybackslash}p{0.13\textwidth}}
\toprule
Method & Overall (/18) & MedQA (/6) & CaseHOLD (/6) & FinQA (/6) & \shortstack{Tokens\\Mean $\pm$ SD} \\
\midrule
Strong-Direct & 4 / 11 / 4 & 4 / 2 / 4 & 0 / 6 / 0 & 0 / 3 / 0 & 679$\pm$201 \\
Weak-Direct & 2 / 16 / 2 & 1 / 5 / 1 & 1 / 5 / 1 & 0 / 6 / 0 & 693$\pm$219 \\
Weak-Fixed-ECMR & 4 / 14 / 4 & 3 / 3 / 3 & 1 / 5 / 1 & 0 / 6 / 0 & 4044$\pm$727 \\
\textbf{Weak-ECMR} & \textbf{8 / 9 / 8} & \textbf{4 / 1 / 4} & \textbf{3 / 3 / 3} & \textbf{1 / 5 / 1} & \textbf{8816$\pm$7368} \\
\textbf{Strong-ECMR} & \textbf{8 / 8 / 8} & \textbf{6 / 0 / 6} & \textbf{2 / 4 / 2} & \textbf{0 / 4 / 0} & \textbf{9751$\pm$7370} \\
\bottomrule
\end{tabular}
\endgroup

\caption{Three-domain cross-backbone results on the separate 18-record high-risk diagnostic. Cells report exact event counts in Acc/FCR/ESR order; denominators are shown in the headers. Token cost is mean$\pm$SD across the three domain aggregates. These rows do not inherit the 9,000-record main-evaluation scope.}
\label{tab:cross_backbone}
\end{table*}

\paragraph{Interpretation.}
Table~\ref{tab:cross_backbone} should be read through within-backbone contrasts. Weak-ECMR increases correct outcomes from 2/18 for Weak-Direct and 4/18 for Weak-Fixed-ECMR to 8/18; Strong-ECMR likewise increases correct outcomes from 4/18 to 8/18 relative to Strong-Direct. Reporting counts avoids the misleading zero-variance and repeated Wald-SEM patterns produced by only six records per domain. The weak-versus-strong rows are not compute matched, and their token costs differ substantially, so the table supports transfer of the control mechanism rather than interchangeability of model capacity.

\begin{figure*}[t]
    \centering
    \includegraphics[width=\textwidth]{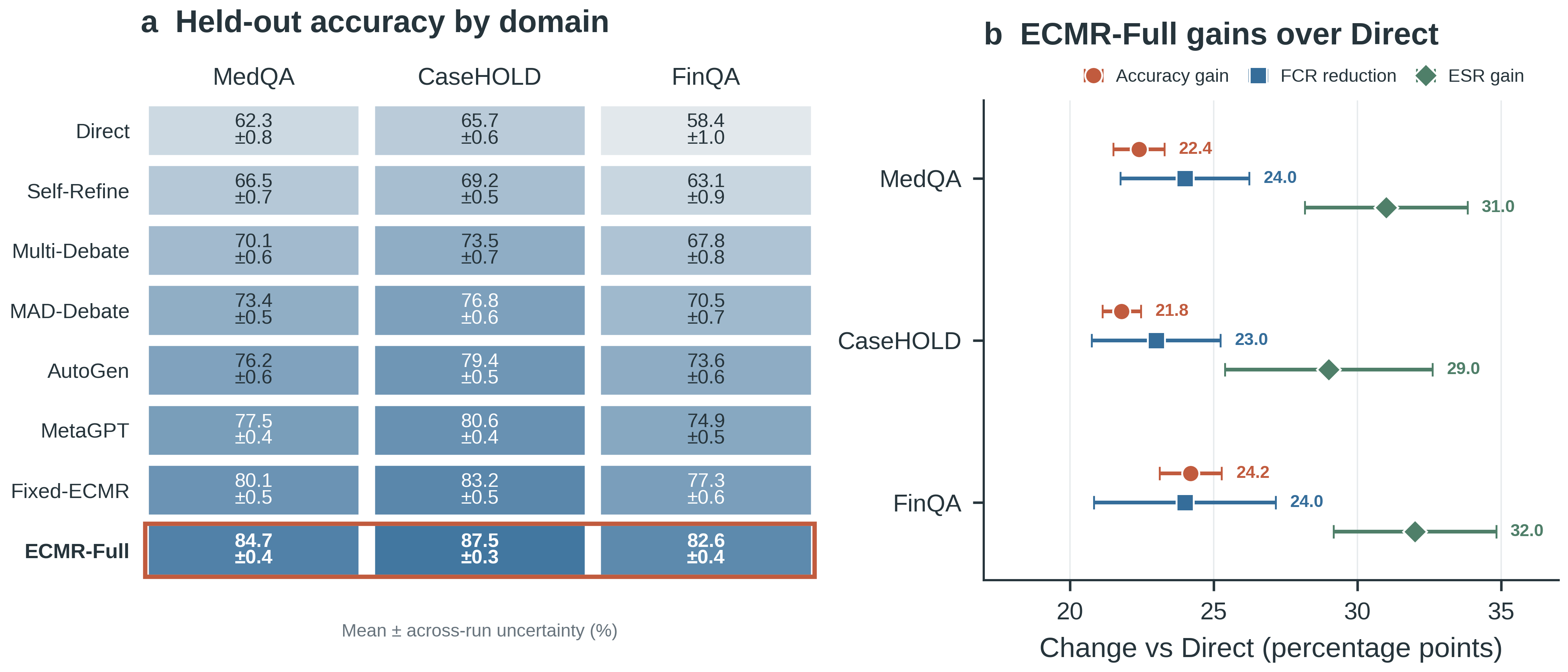}
    \caption{Three-domain held-out reliability. Panel (a) traces domain accuracy and uncertainty for every method; panel (b) shows ECMR-Full accuracy gain, FCR reduction, and ESR gain over Direct with independently propagated uncertainty.}
\label{fig:cross_backbone_fig}
\end{figure*}

Figure~\ref{fig:cross_backbone_fig} tests whether the headline result is an averaging artifact. The accuracy trajectories remain ordered across all three domains, and the ECMR-Full gain bundle moves in the favorable direction for accuracy, FCR, and ESR everywhere. FinQA starts from the lowest Direct accuracy yet shows the largest accuracy and ESR gains, while CaseHOLD remains strongest in absolute accuracy. The pattern is therefore consistent with a domain-general control benefit rather than a single benchmark carrying the macro average.

\subsection{Additional Role-Memory Diagnostics}

\begin{table*}[!htbp]
\centering
\begingroup
\small
\setlength{\tabcolsep}{2.5pt}
\begin{tabular}{p{0.19\textwidth}ccccc}
\toprule
Compliance Variant & \shortstack{Role-Violation\\Rate} & \shortstack{Invalid-Challenge\\Rate} & \shortstack{Self-Contradiction\\Rate} & \shortstack{False-Consensus\\Rate} & \shortstack{Evidence-Support\\Rate} \\
\midrule
No-Compliance & 0.280$\pm$0.0047 & 0.350$\pm$0.0050 & 0.220$\pm$0.0044 & 0.290$\pm$0.0048 & 0.737$\pm$0.0046 \\
Compliance-Monitor & 0.240$\pm$0.0045 & 0.310$\pm$0.0049 & 0.190$\pm$0.0041 & 0.260$\pm$0.0046 & 0.767$\pm$0.0045 \\
Feedback-Only & 0.180$\pm$0.0040 & 0.240$\pm$0.0045 & 0.130$\pm$0.0035 & 0.230$\pm$0.0044 & 0.797$\pm$0.0042 \\
Partial-Loop & 0.130$\pm$0.0035 & 0.180$\pm$0.0040 & 0.090$\pm$0.0030 & 0.200$\pm$0.0042 & 0.837$\pm$0.0039 \\
\textbf{Full-Compliance-Loop} & \textbf{0.070$\pm$0.0027} & \textbf{0.110$\pm$0.0033} & \textbf{0.040$\pm$0.0021} & \textbf{0.170$\pm$0.0040} & \textbf{0.887$\pm$0.0033} \\
\bottomrule
\end{tabular}
\endgroup

\caption{Role-memory feedback under compliance variants. Registered stepwise changes are anchored so Full-Compliance-Loop matches the ECMR-Full macro FCR/ESR endpoint; the $\pm$ terms are binomial standard errors over 9,000 evaluation records per variant.}
\label{tab:compliance}
\end{table*}

\paragraph{Interpretation.}
Table~\ref{tab:compliance} tests the role-correction path in Figure 1 of the main paper. Adding monitoring, feedback, and the closed correction loop produces a monotonic reduction in role violations, invalid challenges, and self-contradictions, while FCR falls and ESR rises. Because the reliability endpoints move with role-behavior failures at every stage, the ablation supports role correction as an upstream control mechanism rather than a cosmetic prompt constraint.

\begin{figure*}[t]
    \centering
    \includegraphics[width=\textwidth]{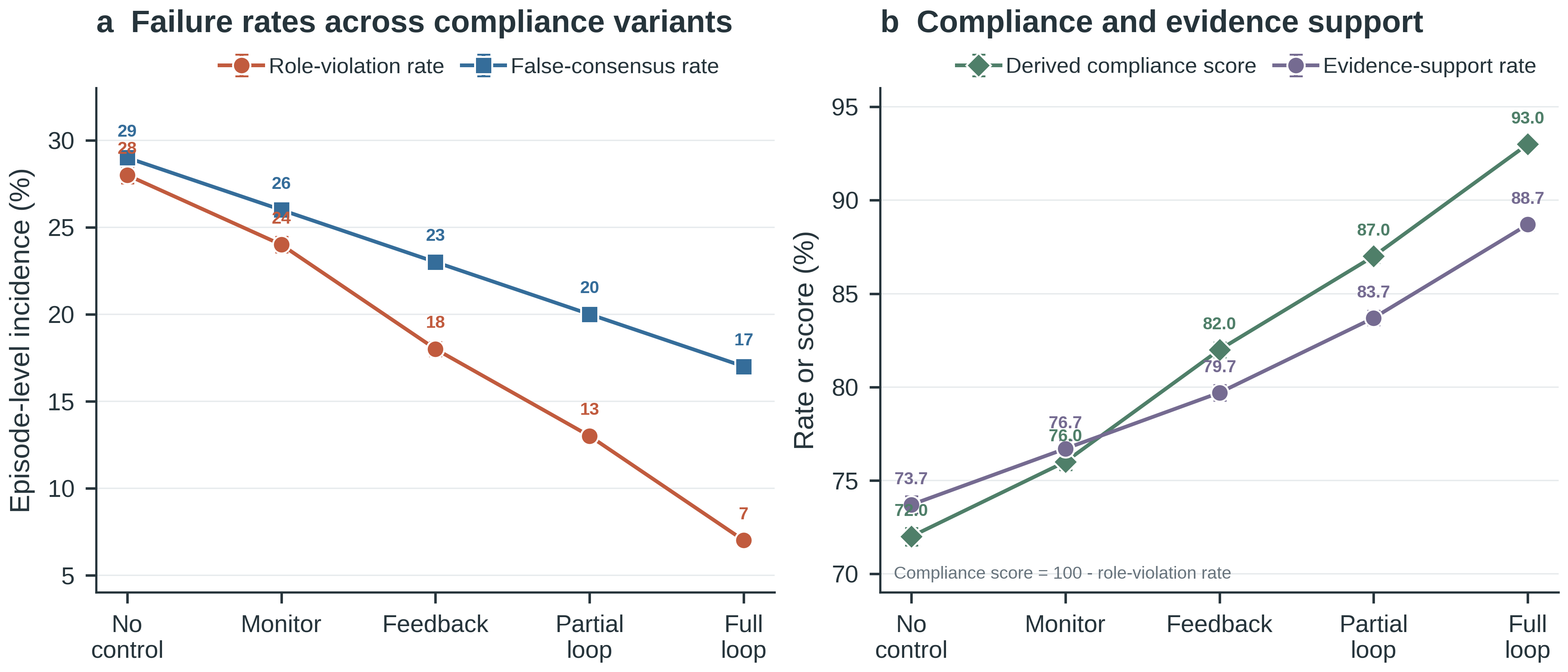}
    \caption{Ordered compliance ablation over 9,000 records per variant. Panel (a) compares role-violation and false-consensus incidence as controls are added; panel (b) relates the derived role-compliance score to evidence-support rate. The full loop matches the ECMR-Full macro endpoint; whiskers are binomial standard errors.}
\label{fig:compliance_fig}
\end{figure*}

Figure~\ref{fig:compliance_fig} exposes the coupling behind the table: variants with fewer role violations also have fewer false consensuses and stronger evidence support. The compliance score in panel (b) is the complement of role-violation rate, not an independent outcome; its value is to align the direction of behavioral compliance with ESR. The evidence is the coordinated endpoint movement across the ordered ablation, not a correlation between two independently measured scores.

\subsection{Additional Online-Adaptation Results}

\begin{table*}[!htbp]
\centering
\begingroup
\small
\setlength{\tabcolsep}{2.5pt}
\begin{tabular}{p{0.18\textwidth}p{0.18\textwidth}p{0.18\textwidth}p{0.18\textwidth}p{0.13\textwidth}}
\toprule
Method & Pre-Shift Acc / FCR & Post-Shift Acc / FCR & Recovery Acc / FCR & Recovery Steps \\
\midrule
Frozen-Baseline & 74.7$\pm$3.5 / 0.333$\pm$0.038 & 64.7$\pm$3.9 / 0.500$\pm$0.041 & 66.7$\pm$3.8 / 0.444$\pm$0.041 & 38$\pm$3 \\
Frozen+EMA & 74.8$\pm$3.4 / 0.320$\pm$0.036 & 69.3$\pm$3.5 / 0.400$\pm$0.039 & 72.0$\pm$3.3 / 0.333$\pm$0.036 & 24$\pm$2 \\
Frozen+Memory & 75.2$\pm$3.3 / 0.280$\pm$0.034 & 72.0$\pm$3.2 / 0.300$\pm$0.035 & 74.7$\pm$3.0 / 0.222$\pm$0.031 & 15$\pm$2 \\
\textbf{Full-Online-ECMR} & \textbf{76.0$\pm$3.2 / 0.200$\pm$0.030} & \textbf{74.0$\pm$3.0 / 0.222$\pm$0.032} & \textbf{76.0$\pm$2.8 / 0.185$\pm$0.028} & \textbf{6$\pm$1} \\
\bottomrule
\end{tabular}
\endgroup

\caption{Online self-learning under a pre-declared distribution shift on the separate 150-record Formal50 diagnostic. Each method is evaluated before the shift, immediately after it, and after recovery; uncertainty retains the recorded across-run stream variation.}
\label{tab:online_adaptation}
\end{table*}

\paragraph{Interpretation.}
Table~\ref{tab:online_adaptation} shows that memory and policy updates affect both the quality and speed of recovery after the declared shift. Full-Online-ECMR returns to its pre-shift accuracy in 6 steps, compared with 38 for the frozen baseline, and recovers to an FCR of 0.185 rather than 0.444. EMA and memory-only variants improve progressively, but neither matches the full controller's 74.0 post-shift accuracy or complete recovery to 76.0.

\subsection{Additional Efficiency Statistics}

\begin{table*}[!htbp]
\centering
\begingroup
\small
\setlength{\tabcolsep}{3pt}
\begin{tabular}{p{0.16\textwidth}p{0.11\textwidth}ccccc}
\toprule
Method & Difficulty & \shortstack{Mean\\Rounds} & \shortstack{Verification\\Trigger Rate} & \shortstack{Arbitration\\Trigger Rate} & \shortstack{Early-Stop\\Trigger Rate} & \shortstack{Mean\\Tokens} \\
\midrule
Direct & Easy & 1.0$\pm$0.0 & 0.000$\pm$0.000 & 0.000$\pm$0.000 & 1.000$\pm$0.000 & 578$\pm$21 \\
Direct & Medium & 1.0$\pm$0.0 & 0.000$\pm$0.000 & 0.000$\pm$0.000 & 1.000$\pm$0.000 & 623$\pm$51 \\
Direct & Hard & 1.0$\pm$0.0 & 0.000$\pm$0.000 & 0.000$\pm$0.000 & 1.000$\pm$0.000 & 867$\pm$74 \\
Multi-Debate & Easy & 3.8$\pm$0.3 & 0.967$\pm$0.018 & 0.950$\pm$0.022 & 0.083$\pm$0.024 & 2967$\pm$84 \\
Multi-Debate & Medium & 3.9$\pm$0.2 & 0.950$\pm$0.022 & 0.933$\pm$0.027 & 0.050$\pm$0.018 & 3083$\pm$208 \\
Multi-Debate & Hard & 4.0$\pm$0.1 & 0.933$\pm$0.027 & 0.917$\pm$0.031 & 0.017$\pm$0.009 & 4161$\pm$296 \\
AutoGen & Easy & 2.7$\pm$0.4 & 0.950$\pm$0.022 & 0.000$\pm$0.000 & 0.150$\pm$0.033 & 2390$\pm$148 \\
AutoGen & Medium & 2.9$\pm$0.3 & 0.933$\pm$0.027 & 0.000$\pm$0.000 & 0.083$\pm$0.024 & 2411$\pm$175 \\
AutoGen & Hard & 3.0$\pm$0.2 & 0.900$\pm$0.032 & 0.000$\pm$0.000 & 0.033$\pm$0.013 & 3130$\pm$186 \\
Fixed-ECMR & Easy & 4.0$\pm$0.0 & 0.983$\pm$0.012 & 0.967$\pm$0.018 & 0.000$\pm$0.000 & 3619$\pm$124 \\
Fixed-ECMR & Medium & 4.0$\pm$0.0 & 0.967$\pm$0.018 & 0.950$\pm$0.022 & 0.000$\pm$0.000 & 3704$\pm$169 \\
Fixed-ECMR & Hard & 4.0$\pm$0.0 & 0.950$\pm$0.023 & 0.933$\pm$0.027 & 0.000$\pm$0.000 & 4548$\pm$193 \\
\textbf{ECMR-Full} & \textbf{Easy} & \textbf{5.3$\pm$1.0} & \textbf{0.992$\pm$0.008} & \textbf{0.350$\pm$0.057} & \textbf{0.650$\pm$0.057} & \textbf{7637$\pm$2593} \\
\textbf{ECMR-Full} & \textbf{Medium} & \textbf{5.8$\pm$0.9} & \textbf{0.983$\pm$0.012} & \textbf{0.650$\pm$0.048} & \textbf{0.350$\pm$0.048} & \textbf{7150$\pm$2022} \\
\textbf{ECMR-Full} & \textbf{Hard} & \textbf{5.9$\pm$0.6} & \textbf{0.967$\pm$0.019} & \textbf{0.900$\pm$0.030} & \textbf{0.100$\pm$0.030} & \textbf{8164$\pm$1752} \\
\bottomrule
\end{tabular}
\endgroup

\caption{Full-evaluation resource allocation by difficulty stratum. Values report mean$\pm$standard deviation across repeated runs. Verification may precede either terminal route; for ECMR-Full, arbitration and early stopping are complementary shares, and mean rounds increase with difficulty within the six-round budget.}
\label{tab:a1}
\end{table*}

\paragraph{Interpretation.}
Table~\ref{tab:a1} separates three possible explanations for ECMR's cost. Verification is already near saturation for several multi-agent baselines, so its trigger rate is not the distinctive signal. Mean rounds increase only modestly from Easy to Hard, whereas ECMR-Full's terminal decision changes sharply from early stopping to arbitration. The additional computation is therefore associated with how unresolved cases terminate, not simply with enabling verification or adding a fixed number of rounds.

\begin{figure*}[t]
    \centering
    \includegraphics[width=\textwidth]{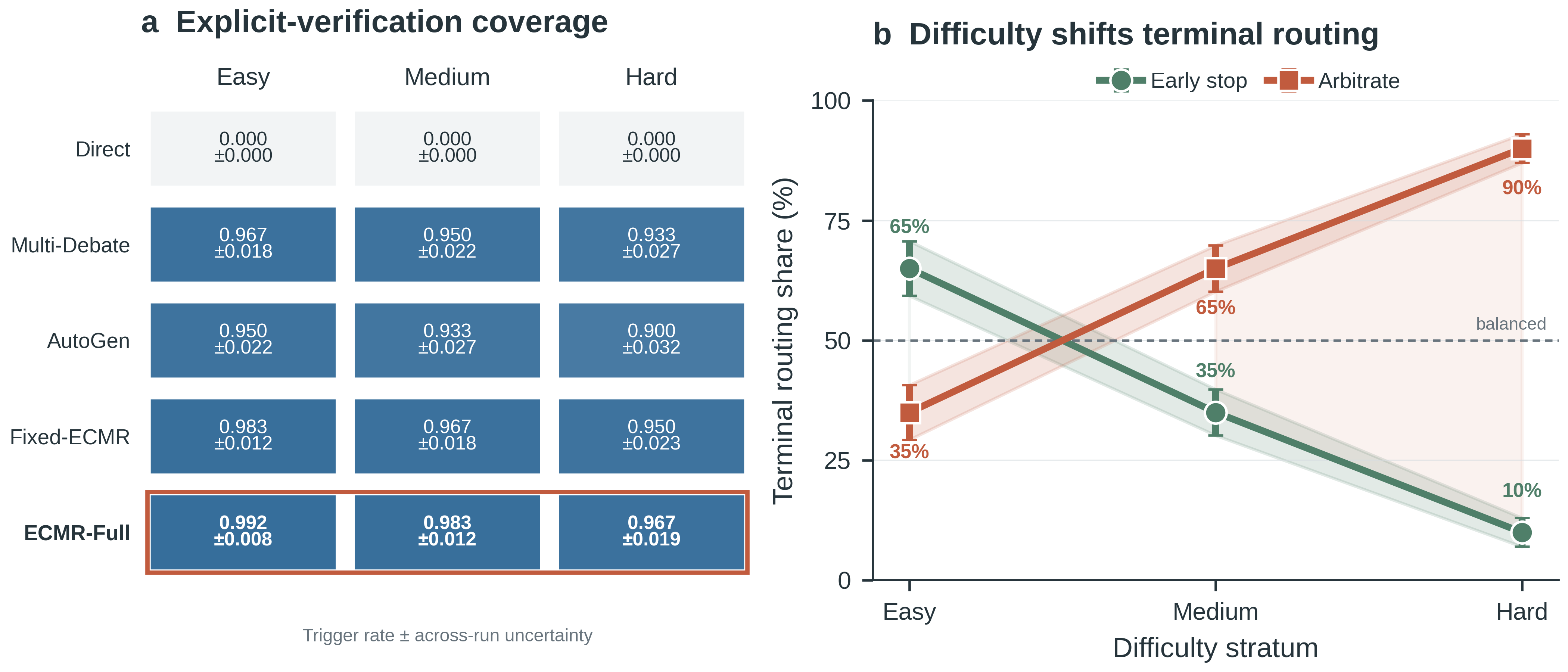}
    \caption{Difficulty-conditioned controller actions. Panel (a) reports the explicit-verification trigger rate for every policy and difficulty stratum. Panel (b) shows ECMR-Full reallocating complementary terminal routes from early stopping on Easy cases to arbitration on Hard cases; whiskers mark routing uncertainty.}
\label{fig:allocation_fig}
\end{figure*}

Figure~\ref{fig:allocation_fig} makes the mechanism discriminative. Verification coverage is high and changes little, but the complementary terminal routes exchange 0.550 probability mass from early stopping to arbitration between Easy and Hard cases. Fixed-ECMR cannot express this conditional exchange, and the state-shuffled ablation loses 11.7 accuracy points despite retaining the same operation set. These two observations connect the routing visualization to the performance result: the gain depends on mapping evidence states to actions, not on action availability alone.

\subsection{Additional Cost-Efficiency Statistics}

\begin{table*}[!htbp]
\centering
\begingroup
\small
\setlength{\tabcolsep}{2.5pt}
\begin{tabular}{p{0.24\textwidth}ccccc}
\toprule
Method & Avg. Tokens & Avg. Latency (s) & $\Delta$Acc vs Weak & Extra tokens / 1pp & Tokens / correct \\
\midrule
Strong-Direct & 679$\pm$201 & 4.72$\pm$1.38 & 11.1 & --- & 3054 \\
Strong-Self-Consistency & 2110$\pm$582 & 16.02$\pm$5.39 & 5.6 & 255.0 & 12658 \\
Weak-Direct & 693$\pm$219 & 3.34$\pm$0.43 & 0.0 & --- & 6238 \\
Weak-Self-Consistency & 2153$\pm$643 & 12.17$\pm$2.09 & 0.0 & --- & 19374 \\
Weak-Fixed-ECMR & 4044$\pm$727 & 15.79$\pm$2.22 & 11.1 & 301.5 & 18196 \\
Weak-ECMR & 8816$\pm$7368 & 27.31$\pm$13.33 & 33.3 & 243.7 & 19836 \\
Strong-ECMR & 9751$\pm$7370 & 41.82$\pm$16.09 & 33.3 & 271.7 & 21941 \\
\bottomrule
\end{tabular}
\endgroup

\caption{Cost efficiency across strong and weak backbones. Token cost and latency are reported as mean$\pm$SD across the three domain aggregates. Accuracy gain is measured relative to Weak-Direct; additional tokens per percentage-point (pp) gain and tokens per correct sample are computed from the recorded diagnostic logs.}
\label{tab:a3}
\end{table*}

\paragraph{Interpretation.}
Table~\ref{tab:a3} separates absolute cost from marginal efficiency. Weak-ECMR requires fewer extra tokens per percentage-point gain than Weak-Fixed-ECMR (243.7 versus 301.5), indicating that dynamic control uses additional computation more productively than a fixed ECMR schedule. At the same time, tokens per correct sample remain high for both weak and strong ECMR variants, reinforcing that controller efficiency should be evaluated jointly with the cost of the frozen backbone and the value assigned to reliability.

\subsection{State Feature Ablation}

\begin{table*}[!htbp]
\centering
\begingroup
\small
\setlength{\tabcolsep}{3pt}
\begin{tabular}{p{0.38\textwidth}cc}
\toprule
Removed Feature & Performance Drop & Importance Rank \\
\midrule
Evidence independence $I^{ind}_t$ & 6.1$\pm$0.5 pp & 1 \\
Support entropy $\widetilde{H}^{sup}_t$ & 5.8$\pm$0.4 pp & 2 \\
Contradiction structure $C^{ctr}_t$ & 5.2$\pm$0.4 pp & 3 \\
Role drift/overlap & 4.8$\pm$0.4 pp & 4 \\
Answer concentration $g_t$ & 4.3$\pm$0.4 pp & 5 \\
Remaining budget $b_t$ & 3.1$\pm$0.3 pp & 6 \\
\bottomrule
\end{tabular}
\endgroup

\caption{State-feature ablation importance over the declared 9,000-record evaluation. Semantic-state features are ranked by recorded accuracy drop in percentage points (pp) and across-run uncertainty when each feature is removed.}
\label{tab:a4}
\end{table*}

\paragraph{Interpretation.}
Table~\ref{tab:a4} ranks the information used by the controller. Removing evidence independence causes the largest performance drop, followed closely by support entropy and contradiction structure. These three leading features all measure the quality and diversity of support rather than the popularity of the current answer. Answer concentration ranks below them, empirically reinforcing the paper's central claim that agreement alone is an insufficient stopping criterion.

\subsection{Risk-Flagged Compliance Gains}

\begin{table*}[!htbp]
\centering
\begingroup
\small
\setlength{\tabcolsep}{3pt}
\begin{tabular}{p{0.32\textwidth}cc}
\toprule
Method & Risk-Flagged FCR & Error Propagation Rate \\
\midrule
No-Compliance & 0.389$\pm$0.115 & 0.333$\pm$0.111 \\
Compliance-Monitor & 0.333$\pm$0.111 & 0.278$\pm$0.106 \\
Feedback-Only & 0.278$\pm$0.106 & 0.222$\pm$0.098 \\
Partial-Loop & 0.222$\pm$0.098 & 0.167$\pm$0.088 \\
\textbf{Full-Compliance-Loop} & \textbf{0.167$\pm$0.088} & \textbf{0.111$\pm$0.074} \\
\bottomrule
\end{tabular}
\endgroup

\caption{Compliance gains on the separate 18-record risk-flagged diagnostic. The $\pm$ terms are binomial standard errors for false-consensus and error-propagation incidence and do not inherit the main evaluation sample count.}
\label{tab:a5}
\end{table*}

\paragraph{Interpretation.}
Table~\ref{tab:a5} shows a stepwise reduction in risk as compliance feedback is added. The full loop reduces risk-flagged FCR from 0.389 to 0.167 and error propagation from 0.333 to 0.111 relative to No-Compliance. The parallel decrease indicates that role correction acts upstream: it limits the propagation of a weak challenge or copied rationale before that error becomes a confident group decision.

\subsection{System Stability}

\begin{table*}[!htbp]
\centering
\begingroup
\small
\setlength{\tabcolsep}{3pt}
\begin{tabular}{p{0.25\textwidth}ccc}
\toprule
Method & Performance Variance & Collaboration Budget Fluctuation & Compliance Rate \\
\midrule
Frozen-Baseline & 12.7 & 28.3\% & 0.71$\pm$0.04 \\
Frozen+EMA & 7.5 & 18.4\% & 0.80$\pm$0.03 \\
Frozen+Memory & 5.3 & 13.2\% & 0.86$\pm$0.03 \\
\textbf{Full-Online-ECMR} & \textbf{2.4} & \textbf{8.6\%} & \textbf{0.92$\pm$0.02} \\
\bottomrule
\end{tabular}
\endgroup

\caption{System stability on the separate 150-record Formal50 streaming diagnostic. Performance variance and collaboration-budget fluctuation are point estimates; compliance-rate uncertainty retains recorded across-run variation.}
\label{tab:a6}
\end{table*}

\paragraph{Interpretation.}
Table~\ref{tab:a6} connects online adaptation to operational stability. Full-Online-ECMR reduces performance variance from 12.7 to 2.4 and collaboration-budget fluctuation from 28.3\% to 8.6\%, while raising compliance from 0.71 to 0.92. The joint movement shows that adaptation is not simply increasing average performance; it also makes the controller's resource use and role behavior more predictable under the streaming protocol.

\begin{figure*}[p]
\centering
\includegraphics[width=\textwidth,height=8.8in,keepaspectratio]{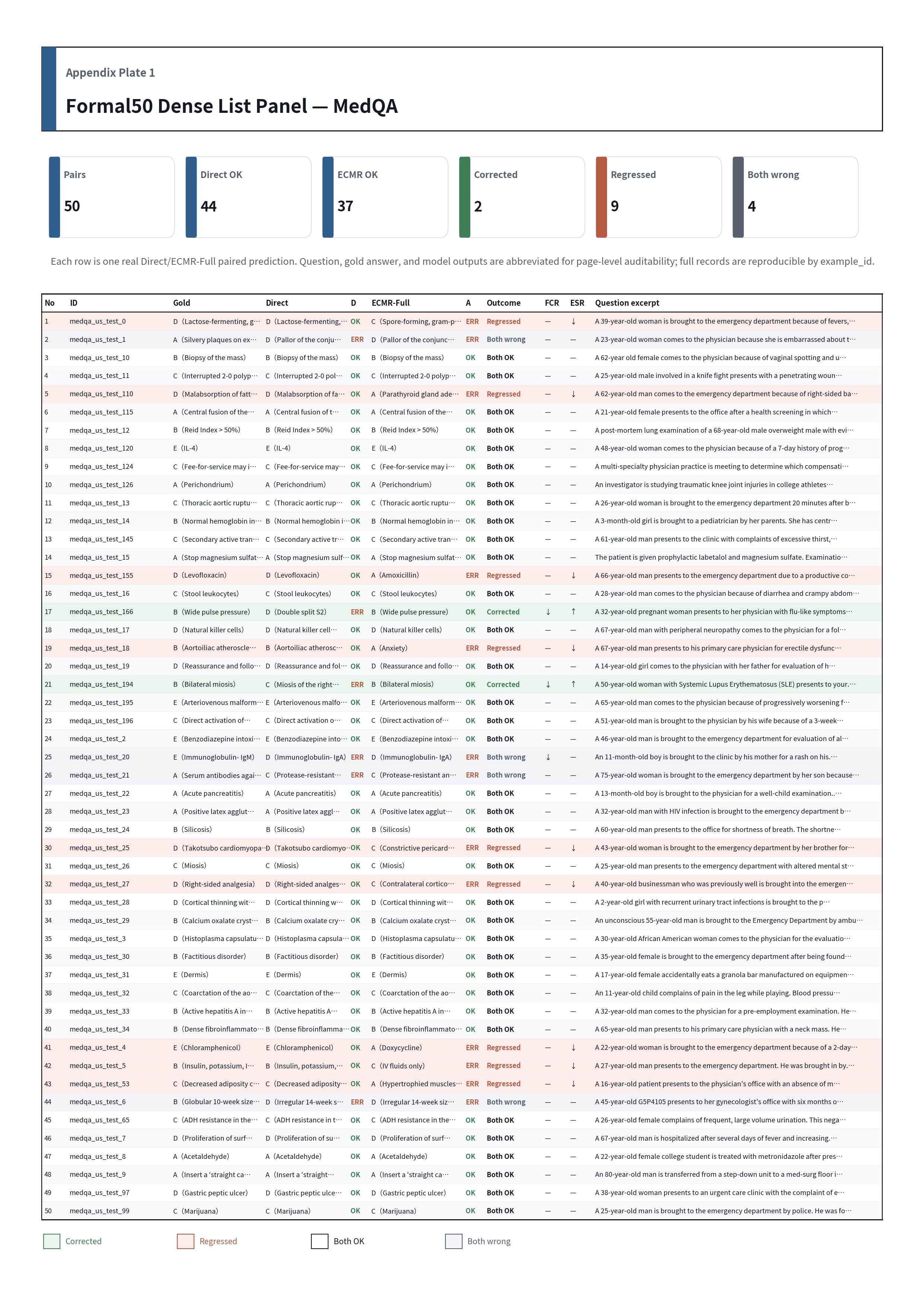}
\end{figure*}
\begin{figure*}[p]
\centering
\includegraphics[width=\textwidth,height=8.8in,keepaspectratio]{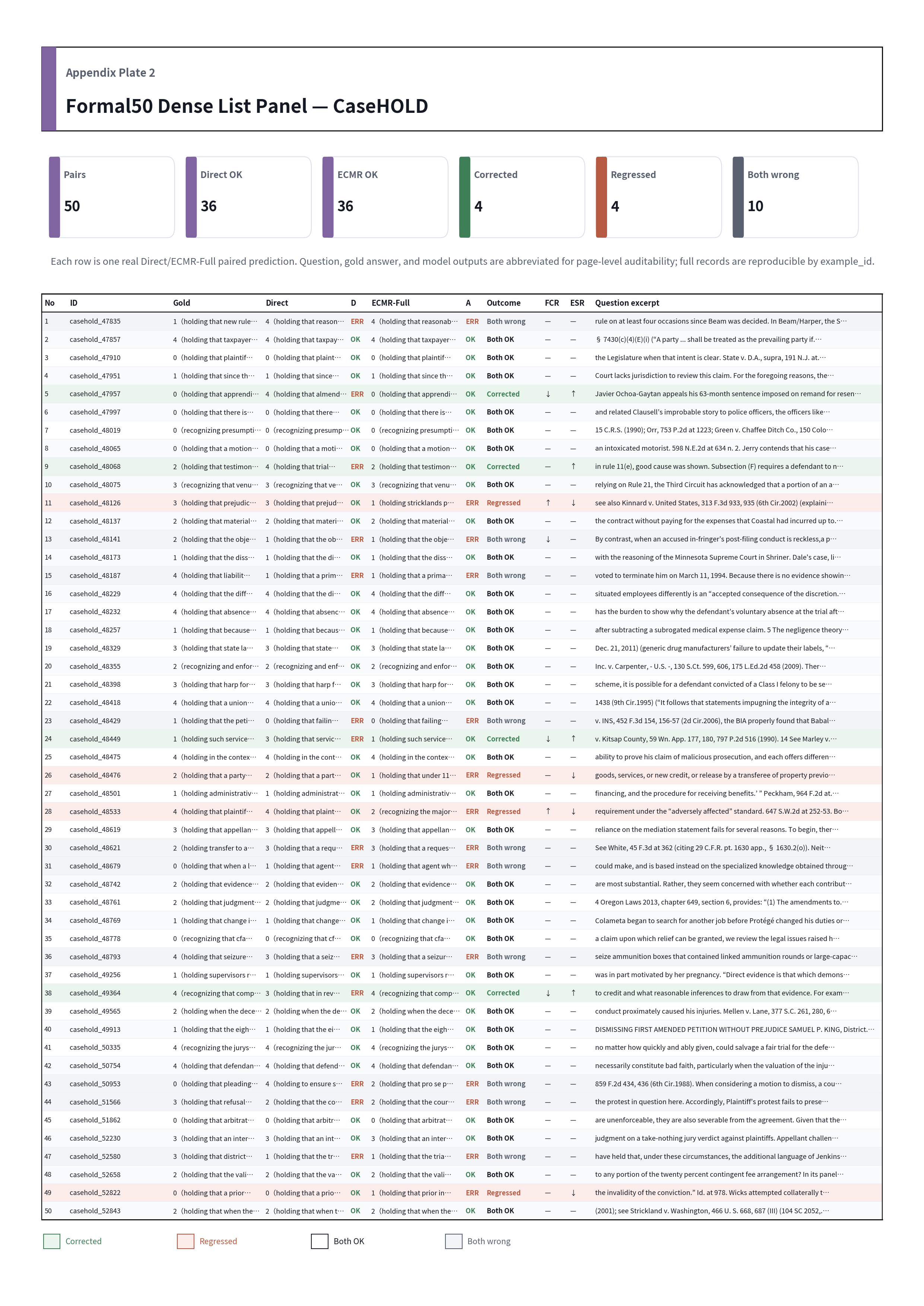}
\end{figure*}
\begin{figure*}[p]
\centering
\includegraphics[width=\textwidth,height=8.8in,keepaspectratio]{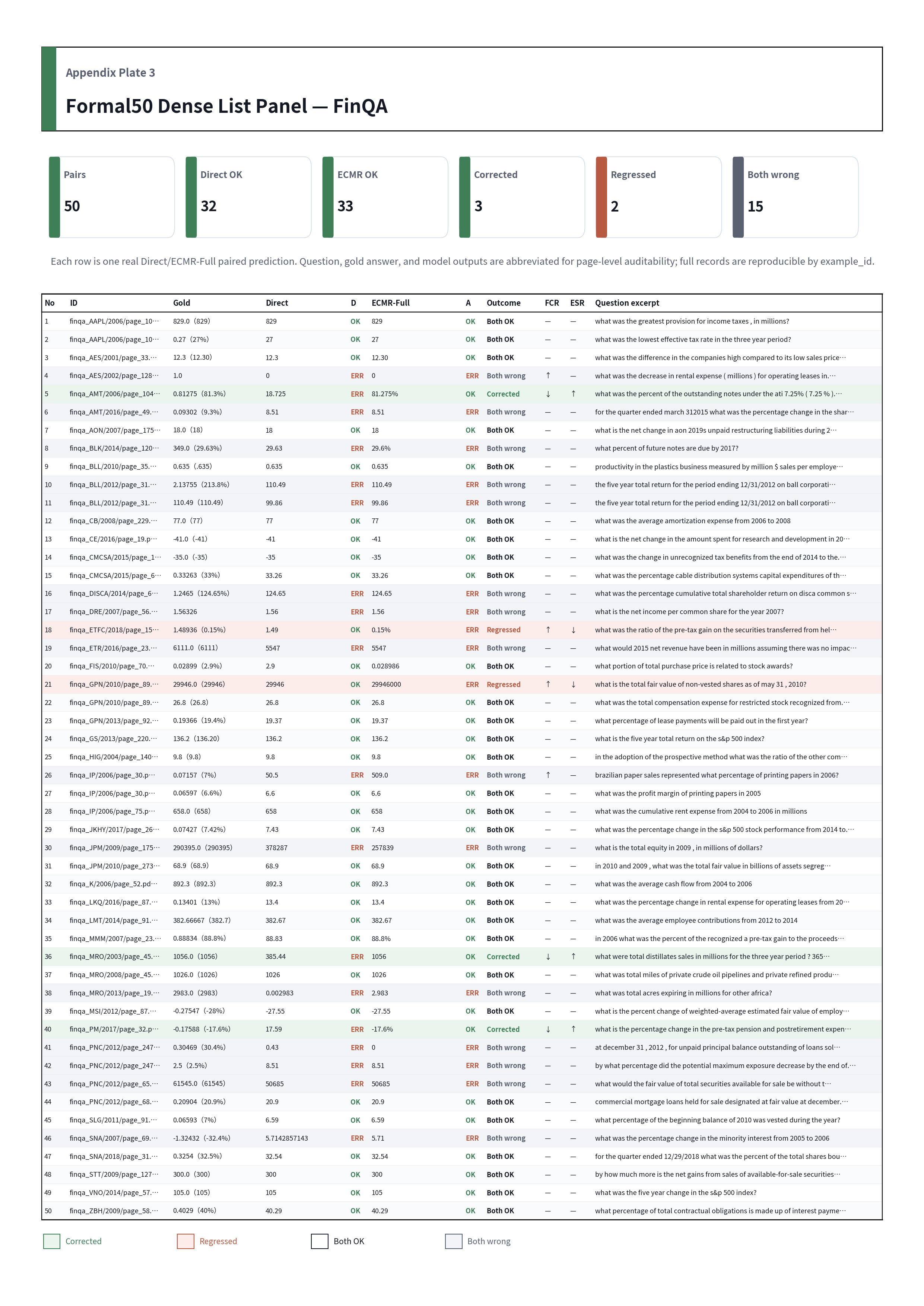}
\end{figure*}
\begin{figure*}[p]
\centering
\includegraphics[width=\textwidth,height=8.8in,keepaspectratio]{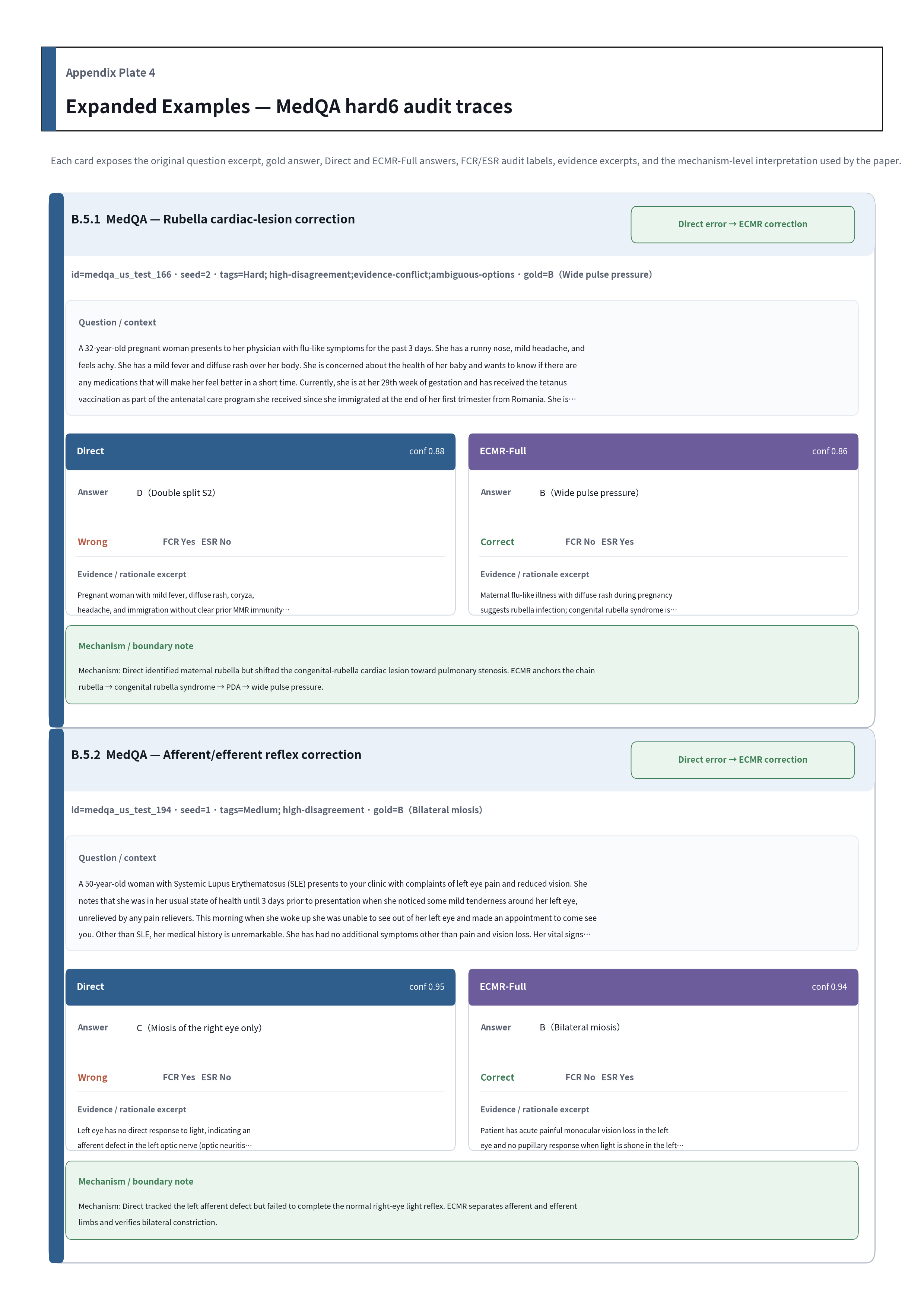}
\end{figure*}
\begin{figure*}[p]
\centering
\includegraphics[width=\textwidth,height=8.8in,keepaspectratio]{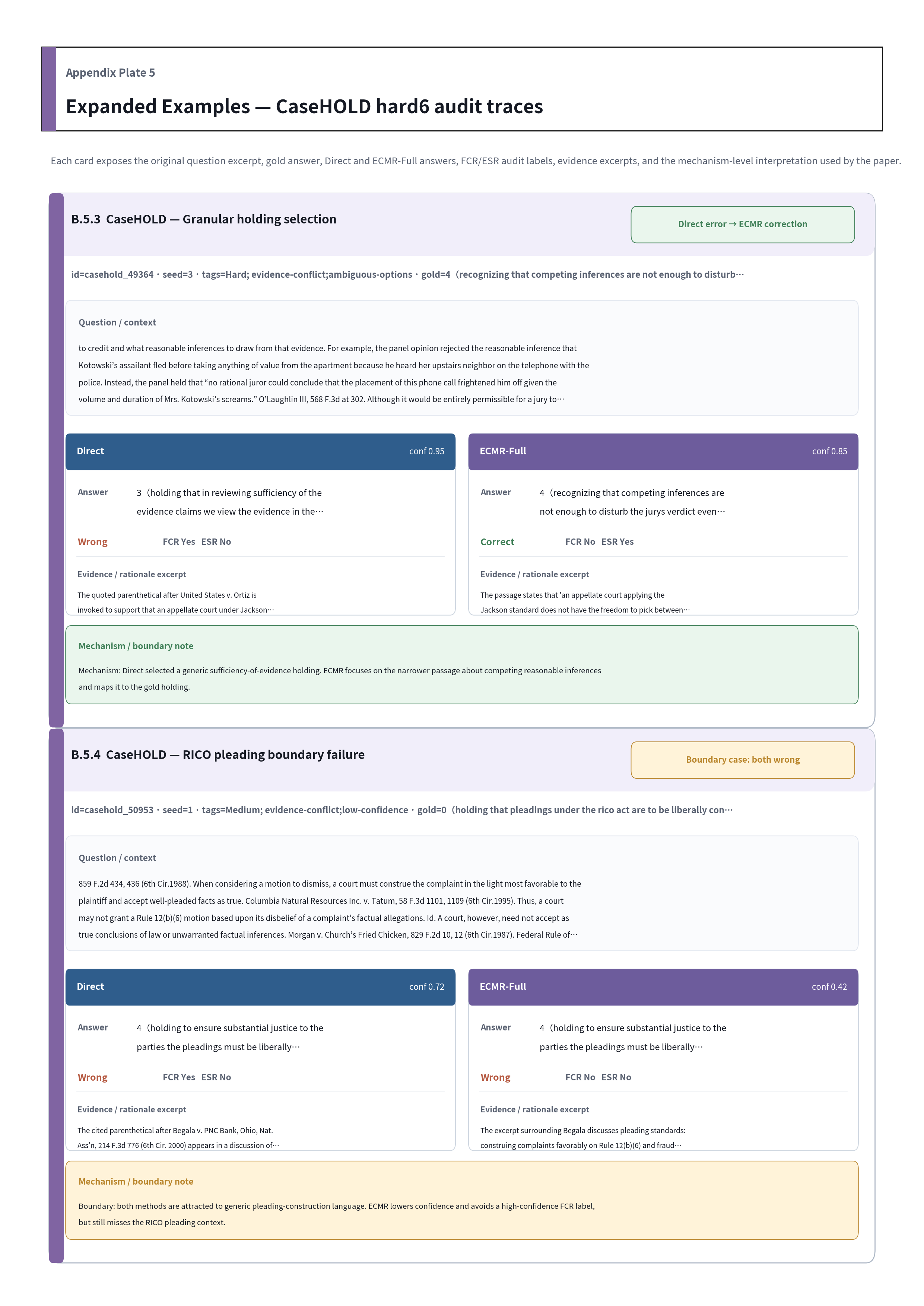}
\end{figure*}
\begin{figure*}[p]
\centering
\includegraphics[width=\textwidth,height=8.8in,keepaspectratio]{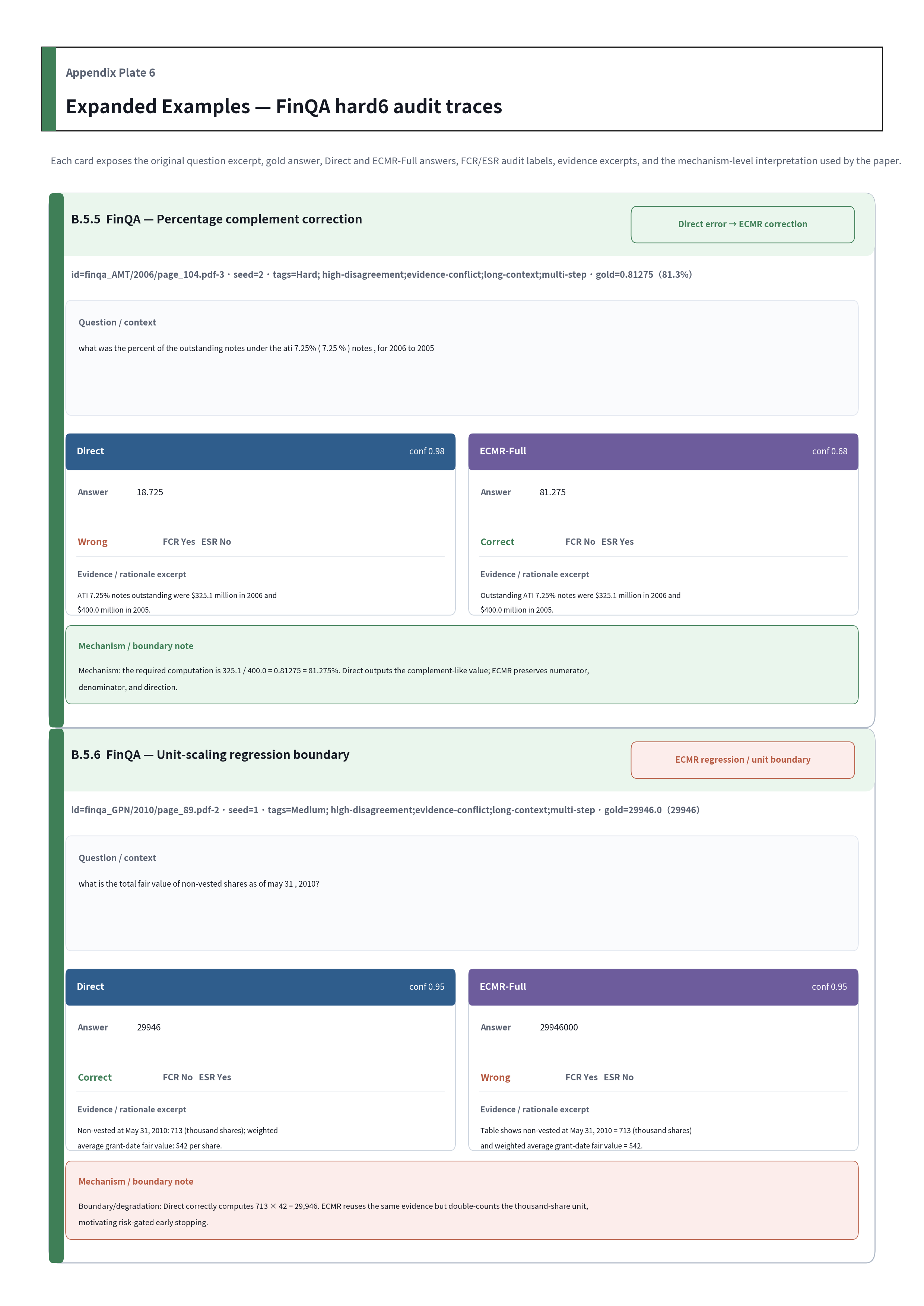}
\end{figure*}


\clearpage
\begin{thebibliography}{25}
\providecommand{\natexlab}[1]{#1}
\providecommand{\url}[1]{\texttt{#1}}
\expandafter\ifx\csname urlstyle\endcsname\relax
  \providecommand{\doi}[1]{doi: #1}\else
  \providecommand{\doi}{doi: \begingroup \urlstyle{rm}\Url}\fi

\bibitem[Chen et~al.(2023{\natexlab{a}})Chen, Dong, Shu, Zhang, Sesay,
  Karlsson, Fu, and Shi]{chen2023autoagents}
Guangyao Chen, Siwei Dong, Yu~Shu, Ge~Zhang, Jaward Sesay, B{\"o}rje~F.
  Karlsson, Jie Fu, and Yemin Shi.
\newblock Autoagents: A framework for automatic agent generation,
  2023{\natexlab{a}}.
\newblock arXiv:2309.17288.

\bibitem[Chen et~al.(2024)Chen, Saha, and Bansal]{chen2024reconcile}
Justin Chen, Swarnadeep Saha, and Mohit Bansal.
\newblock Reconcile: Round-table conference improves reasoning via consensus
  among diverse llms.
\newblock In \emph{Proceedings of the 62nd Annual Meeting of the Association
  for Computational Linguistics (Volume 1: Long Papers)}, 2024.
\newblock \doi{10.18653/v1/2024.acl-long.381}.

\bibitem[Chen et~al.(2023{\natexlab{b}})Chen, Zaharia, and
  Zou]{chen2023frugalgpt}
Lingjiao Chen, Matei Zaharia, and James Zou.
\newblock Frugalgpt: How to use large language models while reducing cost and
  improving performance.
\newblock \emph{Advances in Neural Information Processing Systems},
  2023{\natexlab{b}}.
\newblock \doi{10.48550/arxiv.2305.05176}.

\bibitem[Chen et~al.(2023{\natexlab{c}})Chen, Ma, Wang, and
  Cohen]{chen2022program}
Wenhu Chen, Xueguang Ma, Xinyi Wang, and William~W. Cohen.
\newblock Program of thoughts prompting: Disentangling computation from
  reasoning for numerical reasoning tasks.
\newblock \emph{Transactions on Machine Learning Research}, 2023{\natexlab{c}}.
\newblock \doi{10.48550/arxiv.2211.12588}.
\newblock URL \url{https://openreview.net/forum?id=YfZ4ZPt8zd}.

\bibitem[Chen et~al.(2021)Chen, Chen, Smiley, Shah, Borova, Langdon, Moussa,
  Beane, Huang, Routledge, and Wang]{chen2021finqa}
Zhiyu Chen, Wenhu Chen, Charese Smiley, Sameena Shah, Iana Borova, Dylan
  Langdon, Reema Moussa, Matt Beane, Ting-Hao Huang, Bryan Routledge, and
  William~Yang Wang.
\newblock Finqa: A dataset of numerical reasoning over financial data.
\newblock In \emph{Proceedings of the 2021 Conference on Empirical Methods in
  Natural Language Processing}, 2021.
\newblock \doi{10.18653/v1/2021.emnlp-main.300}.

\bibitem[Du et~al.(2023)Du, Li, Torralba, Tenenbaum, and
  Mordatch]{du2023improving}
Yilun Du, Shuang Li, Antonio Torralba, Joshua~B. Tenenbaum, and Igor Mordatch.
\newblock Improving factuality and reasoning in language models through
  multiagent debate, 2023.

\bibitem[Farquhar et~al.(2024)Farquhar, Kossen, Kuhn, and
  Gal]{farquhar2024detecting}
Sebastian Farquhar, Jannik Kossen, Lorenz Kuhn, and Yarin Gal.
\newblock Detecting hallucinations in large language models using semantic
  entropy.
\newblock \emph{Nature}, 2024.
\newblock \doi{10.1038/s41586-024-07421-0}.

\bibitem[Jiang et~al.(2023)Jiang, Ren, and Lin]{jiang2023llmblender}
Dongfu Jiang, Xiang Ren, and Bill Lin.
\newblock Llm-blender: Ensembling large language models with pairwise ranking
  and generative fusion.
\newblock In \emph{Proceedings of the 61st Annual Meeting of the Association
  for Computational Linguistics (Volume 1: Long Papers)}, 2023.
\newblock \doi{10.18653/v1/2023.acl-long.792}.

\bibitem[Jin et~al.(2021)Jin, Pan, Oufattole, Weng, Fang, and
  Szolovits]{jin2021what}
Di~Jin, Eileen Pan, Nassim Oufattole, Wei-Hung Weng, Hanyi Fang, and Peter
  Szolovits.
\newblock What disease does this patient have? a large-scale open domain
  question answering dataset from medical exams.
\newblock \emph{Applied Sciences}, 2021.
\newblock \doi{10.3390/app11146421}.

\bibitem[Li et~al.(2024)Li, Du, Zhang, Hou, Grabowski, Li, and
  Ie]{li2024improving}
Yunxuan Li, Yibing Du, Jiageng Zhang, Longxiang Hou, Peter Grabowski, Yeqing
  Li, and Eugene Ie.
\newblock Improving multi-agent debate with sparse communication topology.
\newblock In \emph{Findings of the Association for Computational Linguistics:
  EMNLP 2024}, 2024.
\newblock \doi{10.18653/v1/2024.findings-emnlp.427}.

\bibitem[Liang et~al.(2024)Liang, He, Jiao, Wang, Wang, Wang, Yang, Shi, and
  Tu]{tian2024encouraging}
Tian Liang, Zhiwei He, Wenxiang Jiao, Xing Wang, Yan Wang, Rui Wang, Yujiu
  Yang, Shuming Shi, and Zhaopeng Tu.
\newblock Encouraging divergent thinking in large language models through
  multi-agent debate.
\newblock In \emph{Proceedings of the 2024 Conference on Empirical Methods in
  Natural Language Processing}, 2024.
\newblock \doi{10.18653/v1/2024.emnlp-main.992}.

\bibitem[Liu et~al.(2023)Liu, Iter, Xu, Wang, Xu, and Zhu]{liu2023geval}
Yang Liu, Dan Iter, Yichong Xu, Shuohang Wang, Ruochen Xu, and Chenguang Zhu.
\newblock G-eval: Nlg evaluation using gpt-4 with better human alignment.
\newblock In \emph{Proceedings of the 2023 Conference on Empirical Methods in
  Natural Language Processing}, 2023.
\newblock \doi{10.18653/v1/2023.emnlp-main.153}.

\bibitem[Madaan et~al.(2023)Madaan, Tandon, Gupta, Hallinan, Gao, Wiegreffe,
  Alon, Dziri, Prabhumoye, Yang, Gupta, Majumder, Hermann, Welleck,
  Yazdanbakhsh, and Clark]{madaan2023selfrefine}
Aman Madaan, Niket Tandon, Prakhar Gupta, Skyler Hallinan, Luyu Gao, Sarah
  Wiegreffe, Uri Alon, Nouha Dziri, Shrimai Prabhumoye, Yiming Yang, Shashank
  Gupta, Bodhisattwa~Prasad Majumder, Katherine Hermann, Sean Welleck, Amir
  Yazdanbakhsh, and Peter Clark.
\newblock Self-refine: Iterative refinement with self-feedback.
\newblock \emph{Advances in Neural Information Processing Systems}, 2023.
\newblock \doi{10.48550/arxiv.2303.17651}.

\bibitem[Qian et~al.(2024)Qian, Liu, Liu, Chen, Dang, Li, Yang, Chen, Su, Cong,
  Xu, Li, Liu, and Sun]{qian2024chatdev}
Chen Qian, Wei Liu, Hongzhang Liu, Nuo Chen, Yufan Dang, Jiahao Li, Cheng Yang,
  Weize Chen, Yu-Sheng Su, Xin Cong, Juyuan Xu, Dahai Li, Zhiyuan Liu, and
  Maosong Sun.
\newblock Chatdev: Communicative agents for software development.
\newblock In \emph{Proceedings of the 62nd Annual Meeting of the Association
  for Computational Linguistics (Volume 1: Long Papers)}, 2024.
\newblock \doi{10.18653/v1/2024.acl-long.810}.

\bibitem[Schick et~al.(2023)Schick, Dwivedi-Yu, Dessi, Raileanu, Lomeli,
  Hambro, Zettlemoyer, Cancedda, and Scialom]{schick2023toolformer}
Timo Schick, Jane Dwivedi-Yu, Roberto Dessi, Roberta Raileanu, Maria Lomeli,
  Eric Hambro, Luke Zettlemoyer, Nicola Cancedda, and Thomas Scialom.
\newblock Toolformer: Language models can teach themselves to use tools.
\newblock \emph{Advances in Neural Information Processing Systems}, 2023.
\newblock \doi{10.48550/arxiv.2302.04761}.

\bibitem[Shinn et~al.(2023)Shinn, Cassano, Gopinath, Narasimhan, and
  Yao]{shinn2023reflexion}
Noah Shinn, Federico Cassano, Ashwin Gopinath, Karthik Narasimhan, and Shunyu
  Yao.
\newblock Reflexion: Language agents with verbal reinforcement learning.
\newblock \emph{Advances in Neural Information Processing Systems}, 2023.
\newblock \doi{10.48550/arxiv.2303.11366}.

\bibitem[Tang et~al.(2024)Tang, Zou, Zhang, Li, Zhao, Zhang, Cohan, and
  Gerstein]{tang2024medagents}
Xiangru Tang, Anni Zou, Zhuosheng Zhang, Ziming Li, Yilun Zhao, Xingyao Zhang,
  Arman Cohan, and Mark Gerstein.
\newblock Medagents: Large language models as collaborators for zero-shot
  medical reasoning.
\newblock In \emph{Findings of the Association for Computational Linguistics:
  ACL 2024}, 2024.
\newblock \doi{10.18653/v1/2024.findings-acl.33}.

\bibitem[Wang et~al.(2023)Wang, Yue, and Sun]{wang2023can}
Boshi Wang, Xiang Yue, and Huan Sun.
\newblock Can chatgpt defend its belief in truth? evaluating llm reasoning via
  debate.
\newblock In \emph{Findings of the Association for Computational Linguistics:
  EMNLP 2023}, 2023.
\newblock \doi{10.18653/v1/2023.findings-emnlp.795}.

\bibitem[Wang et~al.(2024)Wang, Ma, Feng, Zhang, Yang, Zhang, Chen, Tang, Chen,
  Lin, Zhao, Wei, and Wen]{wang2024survey}
Lei Wang, Chen Ma, Xueyang Feng, Zeyu Zhang, Hao Yang, Jingsen Zhang, Zhiyuan
  Chen, Jiakai Tang, Xu~Chen, Yankai Lin, Wayne~Xin Zhao, Zhewei Wei, and
  Ji-Rong Wen.
\newblock A survey on large language model based autonomous agents.
\newblock \emph{Frontiers of Computer Science}, 2024.
\newblock \doi{10.1007/s11704-024-40231-1}.

\bibitem[Wang et~al.(2022)Wang, Wei, Schuurmans, Le, H., Narang, Chowdhery, and
  Zhou]{wang2022selfconsistency}
Xuezhi Wang, Jason Wei, Dale Schuurmans, Quoc~V. Le, Ed~H., Sharan Narang,
  Aakanksha Chowdhery, and Denny Zhou.
\newblock Self-consistency improves chain of thought reasoning in language
  models.
\newblock \emph{International Conference on Learning Representations}, 2022.
\newblock \doi{10.48550/arxiv.2203.11171}.

\bibitem[Wei et~al.(2022)Wei, Wang, Schuurmans, Bosma, Ichter, Xia, Chi, Le,
  and Zhou]{wei2022chainofthought}
Jason Wei, Xuezhi Wang, Dale Schuurmans, Maarten Bosma, Brian Ichter, Fei Xia,
  Ed~Chi, Quoc Le, and Denny Zhou.
\newblock Chain-of-thought prompting elicits reasoning in large language
  models.
\newblock \emph{Advances in Neural Information Processing Systems}, 2022.
\newblock \doi{10.48550/arxiv.2201.11903}.

\bibitem[Wu et~al.(2023)Wu, Bansal, Zhang, Wu, Li, Zhu, Jiang, Zhang, Zhang,
  Liu, Awadallah, White, Burger, and Wang]{wu2023autogen}
Qingyun Wu, Gagan Bansal, Jieyu Zhang, Yiran Wu, Beibin Li, Erkang Zhu,
  Li~Jiang, Xiaoyun Zhang, Shaokun Zhang, Jiale Liu, Ahmed~Hassan Awadallah,
  Ryen~W. White, Doug Burger, and Chi Wang.
\newblock Autogen: Enabling next-gen llm applications via multi-agent
  conversation, 2023.

\bibitem[Yao et~al.(2023{\natexlab{a}})Yao, Yu, Zhao, Shafran, Griffiths, Cao,
  and Narasimhan]{yao2023tree}
Shunyu Yao, Dian Yu, Jeffrey Zhao, Izhak Shafran, Tom Griffiths, Yuan Cao, and
  Karthik Narasimhan.
\newblock Tree of thoughts: Deliberate problem solving with large language
  models.
\newblock \emph{Advances in Neural Information Processing Systems},
  2023{\natexlab{a}}.
\newblock \doi{10.48550/arxiv.2305.10601}.

\bibitem[Yao et~al.(2023{\natexlab{b}})Yao, Zhao, Yu, Du, Shafran, Narasimhan,
  and Cao]{yao2023react}
Shunyu Yao, Jeffrey Zhao, Dian Yu, Nan Du, Izhak Shafran, Karthik Narasimhan,
  and Yuan Cao.
\newblock React: Synergizing reasoning and acting in language models.
\newblock In \emph{International Conference on Learning Representations},
  2023{\natexlab{b}}.
\newblock \doi{10.48550/arxiv.2210.03629}.

\bibitem[Zheng et~al.(2021)Zheng, Guha, Anderson, Henderson, and
  Ho]{zheng2021casehold}
Lucia Zheng, Neel Guha, Brandon~R. Anderson, Peter Henderson, and Daniel~E. Ho.
\newblock When does pretraining help? assessing self-supervised learning for
  law and the casehold dataset, 2021.

\end{thebibliography}
\end{document}